\newcolumntype{H}{>{\setbox0=\hbox\bgroup}c<{\egroup}@{}}
\newcommand{\noaistats}[1]{}  %
\definecolor{darkgreen}{rgb}{0,0.4,0.0}
\definecolor{darkblue}{rgb}{0,0.1,0.3}
\definecolor{darkred}{rgb}{0.7,0.0,0.0}
\newcommand{\lbs}{\ensuremath{B}}      %
\newcommand{\lepochs}{\ensuremath{E}}  %
\newcommand{\loss}{\ell}
\newcommand{\risk}{\mathcal{L}}
\newcommand{\uniform}{\overline{\mathcal{U}}}
\newcommand{\hstar}{{h}^{\ast}}
\newcommand{\SUB}[1]{\ENSURE \hspace{-0.15in} \textbf{#1}}
\newcommand{\algfont}[1]{\texttt{#1}}
\newcommand{\fedavg}{\algfont{FedAvg}\xspace}
\newcommand{\FedAvg}{\algfont{FedAvg}\xspace}
\newcommand{\arfl}{\algfont{ARFL}\xspace}
\newcommand{\ARFL}{\algfont{ARFL}\xspace}
\newcommand{\RFA}{\algfont{RFA}\xspace}
\newcommand{\CFL}{\algfont{CFL}\xspace}
\newcommand{\MKrum}{\algfont{MKRUM}\xspace}
\newcommand{\rfa}{\algfont{RFA}\xspace}
\newcommand{\T}{\rule{0pt}{2.2ex}}
\newtheorem{lemma}{Lemma}
\newcommand \expect {\mathbb{E}}
\theoremstyle{definition}
\newtheorem{definition}{Definition}
\DeclareMathOperator*{\argmin}{argmin}
\DeclareMathOperator*{\argmax}{argmax}
\title{Auto-weighted Robust Federated Learning with Corrupted Data Sources}
\author{
 Shenghui Li \\
  Uppsala University, Sweden \\
  \texttt{shenghui.li@it.uu.se} \\
   \And
 Edith Ngai \\
  The University of Hong Kong, China \\
  \texttt{chngai@eee.hku.hk} \\
  \And
 Fanhua Ye \\
  University College London, UK \\
  \texttt{fanghua.ye.19@ucl.ac.uk} \\
  \And
 Thiemo Voige \\
  Uppsala University, Sweden \\
  \texttt{thiemo.voigt@it.uu.se} \\
}
\begin{document}
\maketitle

\begin{abstract} 
Federated learning provides a communication-efficient and privacy-preserving training process by enabling learning statistical models with massive participants without accessing their local data. Standard federated learning techniques that naively minimize an average loss function are vulnerable to data corruptions from outliers, systematic mislabeling, or even adversaries. 
In this paper, we address this challenge by proposing Auto-weighted Robust Federated Learning (\arfl), a novel approach that jointly learns the global model and the weights of local updates to provide robustness against corrupted data sources. We prove a learning bound on the expected loss with respect to the predictor and the weights of clients, which guides the definition of the objective for robust federated learning. We present an objective that minimizes the weighted sum of empirical risk of clients with a regularization term, where
the weights can be allocated by comparing the empirical risk of each client with the average empirical risk of the best $p$ clients. This method can downweight the clients with significantly higher losses, thereby lowering their contributions to the global model. We show that this approach achieves robustness when the data of corrupted clients is distributed differently from the benign ones. To optimize the objective function, we propose a communication-efficient algorithm based on the blockwise minimization paradigm. 
We conduct extensive experiments on multiple benchmark datasets, including CIFAR-10, FEMNIST, and Shakespeare, considering different neural network models. The results show that our solution is robust against different scenarios including label shuffling, label flipping, and noisy features, and outperforms the state-of-the-art methods in most scenarios.


\end{abstract}


\section{Introduction}
\label{intro}

Federated learning \citep{mcmahan2017communication,li2020ditto,konevcny2016federateda}~has recently attracted more and more attention due to the increasing concern of user data privacy. In federated learning, the server trains a shared model based on data originating from remote clients such as smartphones and IoT devices, without the need to store and process the data in a centralized server. In this way, federated learning enables joint model training over privacy-sensitive data in a wide range of applications~\citep{yang2018applied, qiangyang2019federated}, including natural language processing~\citep{chen2019federated}, computer vision~\citep{luo2019real}, and speech recognition~\citep{guliani2020training}.

However, standard federated learning strategies fail in the presence of data corruption~\citep{rodriguez2020dynamic,mahloujifar2019data}. Data collected from different clients, or data sources (in this article we use the two terms interchangeably), may vary greatly in data quality and thus reduce the reliability of the learning task. Some of the clients may be unreliable or even malicious. For instance, distributed sensor networks are vulnerable to cybersecurity attacks, such as false data injection attacks~\citep{liu2019gaussian}. In crowdsourcing scenarios, the problem of noisy data is unignorable due to biased or erroneous workers~\citep{wais2010towards}. 
As a consequence, the clients can update their local models using corrupted data and send the parameters to the server. After averaging these harmful updates, the accuracy and the convergence of the shared global model can be compromised.
To mitigate this limitation, different robust learning approaches have been proposed in the literature. Some of these techniques rely on robust statistical estimations (e.g. geometric median estimators) to update the aggregated model~\citep{blanchard2017machine,pillutla2019robust,yin2018byzantine}. Although these techniques are widely used in traditional distributed learning scenarios with i.i.d. datasets, it is not straightforward to generalize them for non-i.i.d. data settings, i.e., some of the clients have significantly different data distribution than the others.
Other approaches require some trusted clients or samples to guide the learning~\citep{li2020communication,peng2019federated,sui2020feded} or detect the updates from corrupted clients~\citep{han2020robust,konstantinov2019robust}. Unfortunately, the credibility of these trusted clients and samples are usually not guaranteed. Since the data is stored locally in the clients, the server is insensible to the corruption behaviors and unable to measure the quality of data sources due to privacy constraints.

In this article, we aim to improve the robustness of federated learning when some of the clients are malicious, i.e., they actively corrupt the local training set by changing the features or the labels.
We propose a novel solution,  named Auto-weighted Robust Federated Learning (\arfl), to jointly learn the global model and the weights of local updates. More specifically, we first prove a learning bound on the expected risk with respect to the predictor and the weights of clients. Based on this theoretical insight, we present our objective that minimizes a weighted sum of the empirical risk of clients with a regularization term. We then theoretically show that the weights in the objective can be allocated by comparing the empirical risk of each client with the best $p$ clients. When the corrupted clients have significantly higher losses comparing with the benign ones, their contributions to the global model will be downweighted or even zero-weighted, so as to play less important roles in the global model. 
Therefore, by using \arfl, we can exclude potentially corrupted clients and keep optimizing the global model with the benign clients.
To solve the problem in federated learning settings, we further propose a communication-efficient optimization method based on the blockwise updating paradigm~\citep{zheng2019communication}. Through extensive experiments on multiple benchmark datasets (i.e., CIFAR-10, FEMNIST and Shakespeare) with different neural network models, we demonstrate the robustness of our approach compared with the state-of-the-art approaches~\citep{mcmahan2017communication, pillutla2019robust, blanchard2017machine, sattler2020byzantine}, showing up to $30\%$ improvement in model accuracy.

\section{Related work}

The concept of federated learning has been proposed for collaboratively learning a model without collecting users' data~\citep{mcmahan2017communication,li2020ditto,konevcny2016federateda, konevcny2016federatedb}. The research work on federated learning can be divided into three categories, i.e., horizontal federated learning, vertical federated learning, and federated transfer learning, based on the distribution characteristics of the data. Due to space limits, we refer to Yang et al.~\citep{yang2019federated} for detailed explanations. In this paper we focus on horizontal federated learning where datasets in all clients share the same feature space but different samples. In 2017, one of the most famous horizontal federated learning framework, Federated Averaging (\fedavg) has been proposed to update global parameters with a weighted average of the model parameters sent by a subset of clients after several local iterations~\citep{mcmahan2017communication}.  

Due to the nature of data decentralization and the requirement of collaborative training from multiple clients, federated learning is vulnerable to malicious corruption of training data from remote clients~\mbox{\citep{lyu2020threats}}. Notably, it has been shown that traditional federated learning approaches (e.g., \fedavg) are fragile in the presence of malicious or corrupted clients~\mbox{\citep{fang2020local,so2020byzantine}}. To mitigate the impact of malicious clients and improve the robustness of federated training, several  solutions have been proposed in the literature~\mbox{\citep{so2020byzantine,pillutla2019robust,pmlrv139kairouz21a,li2020ditto}}.  Among these robust approaches, robust statistical estimations have received much attention in particular. Typical estimation rules include Geometric Median (GM) ~\citep{chen2017distributed,pillutla2019robust}, trimmed mean~\citep{yin2018byzantine}, and Krum~\citep{blanchard2017machine}. For instance, in 2017, \citet{chen2017distributed} proposed to apply GM  as a gradient aggregation protocol in robust distributed learning and showed that it can tolerate up to half malicious clients while estimating the underlying true gradients. In 2019, \citet{pillutla2019robust} used GM to aggregate parameters in a robust FL solution. However, the robustness of the traditional estimators are limited as they rely on the assumption that data are i.i.d. and balanced among the clients, i,e, they distribute identically and have the same (or a similar) number of training data points. Hence, those approaches can be inefficient when some of the clients have significantly more data than others. 
In addition, \citet{li2020ditto} proposed a multi-task learning objective called Ditto, for federated learning that provides robustness via personalization in 2021. The optimization of Ditto, however, requires extra training overhead for personalized models.

Others~\citep{li2020learning,han2020robust,sattler2020byzantine,cao2021fltrust}, couple the process of teaching and learning based on a few trusted instances to produce a robust prediction model. For example, in 2020, \citet{sattler2020byzantine} proposed to separate the client population into different groups (e.g., benign and corrupted groups) based on the pairwise cosine similarities between their parameter updates. Also in 2020, \citet{li2020learning} suggested allowing the server to learn to detect and remove the malicious model updates using an encoder-decoder based detection model. These approaches require some trusted clients or samples to guide the learning or detect the updates from corrupted clients. In 2021, \citet{cao2021fltrust} presented a Byzantine-robust method that allows the service provider itself collect a clean small training datase  for the learning task and maintain a model based on it to bootstrap trust. Unfortunately, the credibility of these trusted clients and samples are usually not guaranteed since the data is isolated and stored locally. Thus, the server is not aware of these corruption behaviors and does not have the ability to measure the quality of data at the sources due to privacy and communication constraints. Different from previous studies, we propose a robust approach that can learn both the global model and the weights of clients automatically from a mix of reliable and unreliable clients, without the need of any pre-verified trusted instances.

\section{Preliminaries and Motivation}

\subsection{Federated Learning}
In federated learning tasks, a general assumption is that the target distribution for which the centralized model is learned  is a weighted mixture of distributions associated with all clients (or data sources), that is, if we denote by ${\mathcal{D}}_{i}$ the distribution associated with the $i$-th client, the centralized model is trained to minimize the risk with respect to $\mathcal{D}_{\bm{\alpha}} = \sum_{i=1}^N \alpha_i {\mathcal{D}}_{i}$, where $N$ is the total number of clients, $\bm{\alpha} = (\alpha_1 ,..., \alpha_N)^{\top}$ is the vector of source-specific weights. We also have $\bm{\alpha} \in \mathbb{R}_+^n$ and $\bm{1}^\top\bm{\alpha}=1$~\cite{mohri2019agnostic, hamer2020fedboost,li2020ditto}.

Let $\loss_{h}(\bm{z})$ be the loss function that captures the error of a predictor $h \in \mathcal{H}$ (where $\mathcal{H}$ is the hypothesis class) on the training data $\bm{z} = (\bm{x}, y)$ (where $(\bm{x}, y)$ is the input and output pair), and $\mathcal{L}_{\mathcal{D}_{\bm{\alpha}}}(h)$ be the risk of a predictor $h$ on the mixture data distribution $\mathcal{D}_{\bm{\alpha}}$, we have:
\begin{equation} 
 \mathcal{L}_{\mathcal{D}_{\bm{\alpha}}}(h) = 
 \sum_{i=1}^N \alpha_i {\mathcal{L}}_{{i}}(h)  = 
 \sum_{i=1}^N \alpha_i \expect_{\bm{z}\sim \mathcal{D}_i} (\loss_{h}(\bm{z})),
 \label{exp_risk}
\end{equation} where ${\risk}_{i}(h) = \expect_{\bm{z}\sim \mathcal{D}_i} (\loss_{h}(\bm{z}))$ is the expected loss of a predictor $h$ on the data distribution $\mathcal{D}_i$ of the $i$-th client.

Most prior work in federated learning has assumed that all samples are uniformly weighted, where the underlying assumption is that the target distribution is $\uniform =  \sum_{i = 1}^N \frac{m_i}{M} \mathcal{D}_i $, where $m_i$ is the number of samples from client $i$ and $M = \sum_{i=1}^N m_i$.
Thus the risk becomes:
\begin{equation} 
  \mathcal{L}_{\uniform}(h) =  
 \sum_{i=1}^N \frac{m_i}{M} \expect_{\bm{z}\sim \mathcal{D}_i} (\loss_{h}(\bm{z})).
 \label{exp_risk_m}
\end{equation}

In practice, the goal is to minimize a traditional empirical risk ${\hat{\risk}}_{\uniform}(h)$ as follows:
\begin{equation} 
 {\hat{\risk}_{\uniform}}(h) = 
 \sum_{i=1}^N \frac{m_i}{M} \frac{1}{m_i} \sum_{j=1}^{m_i} \loss_{h}(\bm{z}_{i,j}),
 \label{emp_risk_m}
\end{equation} 
which can be minimized by sampling a subset of clients randomly at each round, then running an optimizer such as stochastic gradient descent (SGD) for a variable number of iterations locally on each client. These local updating methods enable flexible and efficient communication compared to traditional mini-batch methods, which would simply calculate a subset of the gradients~\cite{wang2019cooperative,stich2018local,yu2019parallel}. One of the most well-known methods to minimize Eq.~(\ref{emp_risk_m}) in non-convex settings is \fedavg~\cite{mcmahan2017communication}, which runs simply by letting each selected client apply a fixed number of epochs of SGD locally and then averaging the resulted local models.

\subsection{Threat Model: Data Corruption}
Guerraoui et al. has shown that a few  clients with corrupted data can lead to inaccurate models~\cite{mhamdi2018hidden}. The problem stems from a mismatch between the target distribution and $\uniform$. That is, in corruption scenarios, the target distribution may in general be quite different from $\uniform$, since $\uniform$ includes some corrupted components. We expect that the data distributions are more similar among benign clients compared with the corrupted ones even when the data is non-i.i.d. More specifically, we model the target distribution with corrupted clients as, 
\begin{align}
\mathcal{D}_{\bm{\alpha}} &= \sum_{i = 1}^N \alpha_i \mathcal{D}_i = \sum_{i = 1}^N \alpha_i (\eta_i \mathcal{D}_{i,b} + (1 - \eta_i) \mathcal{D}_{i, c}),  
\end{align}
where $\eta_i \in \{0, 1\}$ denotes whether the local data distribution $\mathcal{D}_i$ is a benign distribution  $\mathcal{D}_{i,b}$ (when $\eta_i = 1$) or a corrupted distribution $\mathcal{D}_{i, c}$ (when $\eta_i = 0$). Ideally, the corruption can be usually measured by the difference between $\mathcal{D}_{i,b}$ and $\mathcal{D}_{i, c}$ using statistical distance (e.g., the Kullback-Leibler divergence), if both distributions are known~\cite{konstantinov2019robust}. In practice, data corruption can be achieved by modifying features or labels of the local training dataset.

When $\sum_{i=1}^N {\eta_i} = N$, all components of the mixture distribution are benign (i.e. $\mathcal{D}_i = \mathcal{D}_{i,b}$). Assuming that the target distribution is uniform $\uniform$, minimizing Eq.~(\ref{emp_risk_m}) can lead to an accurate global model. However, when there are corrupted data sources (i.e. $\sum_{i=1}^N {\eta_i} < N$), the mixture of the distributions will include some corrupted components $\mathcal{D}_{i,c}$. In this case, optimizing the empirical risk with respect to $\uniform$ will not lead to an accurate global model. 
{
We make the following assumptions regarding the adversaries: 
\begin{enumerate}
    \item~Each adversary controls exactly one non-colluding and corrupted client. The data distributions of any corrupted clients are independent of each other. Since all clients do not collude, the effect of malicious updates from each adversary to the global model is limited~\cite{bhagoji2019analyzing}.
    
    \item~A corrupted client has a higher loss with respect to the best predictor under the uniformly weighted assumption in Eq.~(\ref{exp_risk_m}), i.e., we have
        \begin{equation}
            \hat{\risk}_{{D}_{i,c}}(\hstar) > \hat{\risk}_{{D}_{i,b}}(\hstar),
        \end{equation} where ${h}^{\ast}$ is the optimal global predictor that minimizes the empirical risk. Base on this, we will identify the corrupted clients according to their empirical losses as discussed in the next section.
\end{enumerate}

}


\subsection{Introductory Example}

\begin{figure*}[t]
     \centering
     \includegraphics[width=\textwidth]{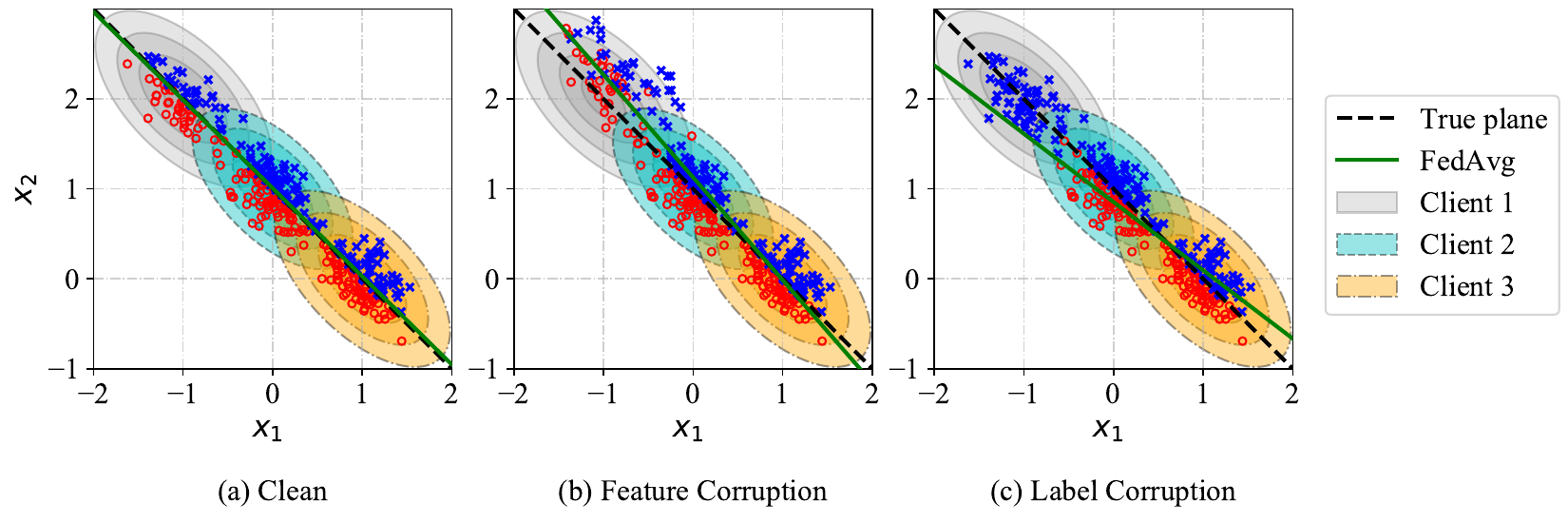}
     \caption{Illustration of federated learning using the standard \FedAvg with potential corruptions in local datasets, where red circles represent data of class 0 and blue crosses represent class 1. (a) All samples are clean. (b) The features in Client 1 are shifted by random noise. (c) The labels in Client 1 are forced to class 1.}
     \label{fig_example}
\end{figure*} 
As an example, we train a binary classification model from three clients, where the data is originally generated from three distributions with linearly separable classes. A Logistic Regression (LR) model is learned using the standard \FedAvg approach. As shown in Fig.~\ref{fig_example}(a), the learned separating plane is close to the true plane when the all datasets are clean. When Client 1 is corrupted on either feature $\bm{x}$ (Fig.~\ref{fig_example}(b)) or label $y$ (Fig.~\ref{fig_example}(c)), the learned plane is driven away from the true one.

One possible solution is to exclude all the corrupted components of the mixture distribution and seek for a new target distribution considering only benign clients, i.e. $\uniform_b = \sum_{i \in \mathcal{B}} \frac{m_i}{M_{\mathcal{B}}} \mathcal{D}_i$, where $\mathcal{B}$ is the set of all benign clients and $M_{\mathcal{B}} = \sum_{i \in \mathcal{B}}m_i$. The challenge with this approach is that the centralized server is agnostic to the corruptions on the local clients and hence it is impossible to measure the data quality directly. 

\section{Robust Federated Learning}
In this section, we describe our Auto-weighted Robust Federated Learning (\arfl) in detail. Specifically, we present a novel objective according to a learning bound with respect to both the predictor $h$ and the weights $\bm{\alpha}$. We analyze the robustness of the proposed objective against corruptions and optimize it with a federated learning based algorithm.

\subsection{Learning Bound}

In the following theorem, we present a learning bound on the risk on the weighted mixture distribution with respect to the predictor $h$ and the weights $\bm{\alpha}$. A proof is provided in Appendix \ref{proof1}.

\begin{restatable}{theorem}{mainthm}
\label{thm:main_bound}
We denote $\hat{\risk}_i(h)$ as the corresponding empirical counterparts of ${\risk}_{\mathcal{D}_i}(h)$. Assume that the loss function $\loss_{h}(\bm{z})$ is bounded by a constant $\mathcal{M} > 0$. Then, for any $\delta \in (0, 1]$, with probability at least $1 - \delta$ over the data, the following inequality holds:
\begin{align}
\notag \mathcal{L}_{\mathcal{D}_{\bm{\alpha}}}(h) \leq \sum_{i=1}^N \alpha_i{\hat{\risk}}_{i}(h) + 2\sum_{i=1}^N \alpha_i \mathcal{R}_i \left(\mathcal{H}\right)  + 3 \sqrt{\frac{\log\left(\frac{4}{\delta}\right)\mathcal{M}^2}{2}}
\sqrt{\sum_{i=1}^n\frac{\alpha_i^2}{m_i}},
\label{eqn:final_bound}
\end{align}
where, for each client $i = 1, \ldots, N$,
\begin{equation*}
\mathcal{R}_i \left(\mathcal{H}\right) = \mathbb{E}_{\sigma}\left(\sup_{f\in\mathcal{H}}\left(\frac{1}{m_i}\sum_{j=1}^{m_i}\sigma_{i,j}\loss_f(z_{i,j})\right)\right)
\end{equation*}
\noindent and $\sigma_{i,j}$s are independent uniform random variables taking values in $\{-1, +1\}$. These variables are called Rademacher random variables~\citep{yin2019rademacher}.
\end{restatable}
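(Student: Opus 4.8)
The plan is to establish the bound by a uniform-convergence argument combining the bounded-difference (McDiarmid) inequality with a Rademacher symmetrization, adapted to the fact that the target risk is a \emph{fixed} convex combination $\sum_{i=1}^N \alpha_i \risk_i(h)$ of the per-client risks. First I would introduce the supremum deviation
\[
\Phi(S) = \sup_{h\in\mathcal{H}}\left(\mathcal{L}_{\mathcal{D}_{\bm{\alpha}}}(h) - \sum_{i=1}^N \alpha_i \hat{\risk}_i(h)\right),
\]
where $S=\{z_{i,j}\}$ denotes the pooled collection of samples across all clients. Since $\mathcal{L}_{\mathcal{D}_{\bm{\alpha}}}(h) - \sum_i \alpha_i \hat{\risk}_i(h) \le \Phi(S)$ for every $h$, a high-probability upper bound on $\Phi(S)$ holds simultaneously over all of $\mathcal{H}$, which is what the theorem requires.

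Next I would apply McDiarmid's inequality to $\Phi(S)$, and the key computation is the bounded-difference constant. Replacing a single sample $z_{i,j}$ perturbs only the term $\alpha_i\hat{\risk}_i(h)$; because the loss lies in $[0,\mathcal{M}]$, the empirical mean $\hat{\risk}_i$ changes by at most $\mathcal{M}/m_i$, so $\Phi$ changes by at most $c_{i,j}=\alpha_i\mathcal{M}/m_i$. Summing squares gives $\sum_{i,j} c_{i,j}^2 = \mathcal{M}^2\sum_{i=1}^N \alpha_i^2/m_i$, which is precisely the quantity under the square root of the final term. Note that applying McDiarmid to the whole sample \emph{at once}, rather than client by client, is exactly what produces the weighted $\ell_2$ form $\sqrt{\sum_i \alpha_i^2/m_i}$ instead of a looser sum of per-client deviations.

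I would then control $\E_S[\Phi(S)]$ by the standard ghost-sample symmetrization: introducing an independent copy of the data and Rademacher variables $\sigma_{i,j}$, Jensen's inequality yields $\E_S[\Phi(S)] \le 2\,\E_{S,\sigma}\big[\sup_{h}\sum_i \alpha_i \tfrac{1}{m_i}\sum_j \sigma_{i,j}\loss_h(z_{i,j})\big]$, and subadditivity of the supremum (the supremum of a sum is at most the sum of suprema) splits this into $2\sum_i \alpha_i \E_S[\mathcal{R}_i(\mathcal{H})]$, twice the $\bm{\alpha}$-weighted sum of \emph{expected} Rademacher complexities. Because the theorem is phrased in terms of the \emph{empirical} complexity $\mathcal{R}_i(\mathcal{H})$ (the data are fixed inside its definition), a second McDiarmid application is needed to replace $\E_S[\mathcal{R}_i(\mathcal{H})]$ by $\mathcal{R}_i(\mathcal{H})$; this step carries the same bounded-difference constants $\alpha_i\mathcal{M}/m_i$ and therefore contributes another term of the same $\sqrt{\sum_i \alpha_i^2/m_i}$ form. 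Combining the two concentration events by a union bound, assigning failure probability $\delta/4$ to each so that each confidence term carries $\sqrt{\tfrac{\log(4/\delta)\mathcal{M}^2}{2}}\sqrt{\sum_i\alpha_i^2/m_i}$, and collecting the coefficients $1$ and $2$ from the two steps, produces the factor $3$ in the final bound.

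The main obstacle, and the place I would be most careful, is the bookkeeping that guarantees the weighted $\sqrt{\sum_i \alpha_i^2/m_i}$ dependence rather than a looser bound. This requires treating $\bm{\alpha}$ as fixed throughout, applying both bounded-difference arguments to the pooled sample with the per-coordinate constants $\alpha_i\mathcal{M}/m_i$ (so that heavily weighted or data-poor clients correctly inflate the confidence term), and checking that subadditivity of the supremum does not destroy the per-client separation needed to recover each $\mathcal{R}_i(\mathcal{H})$. The remaining ingredients, namely the range-$\mathcal{M}$ assumption on the loss, the symmetrization, and the union bound over the two concentration events, are routine once these constants are pinned down.
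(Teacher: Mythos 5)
Your proposal is correct and follows essentially the same route as the paper's proof: the same supremum-deviation function with per-coordinate bounded differences $\alpha_i\mathcal{M}/m_i$, McDiarmid on the pooled sample, ghost-sample symmetrization, a second McDiarmid step to pass to the empirical Rademacher complexities, and the same $1+2$ accounting of the factor $3$ with failure probability $\delta/4$ per event. The only (immaterial) difference is ordering: you apply subadditivity of the supremum before the second concentration step, whereas the paper concentrates the weighted Rademacher average first and splits it into the per-client $\mathcal{R}_i(\mathcal{H})$ terms at the very end.
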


Although the Rademacher complexities in the bound are functions of both the underlying distribution and the hypothesis class~\citep{yin2019rademacher}, in practice one usually works with a computable upper bound of $\mathcal{R}_i\left(\mathcal{H}\right)$  that is distribution-independent (e.g. using VC dimension)~\citep{shalev2014understanding, konstantinov2019robust, bousquet2003introduction}. In our setting the hypothesis space $\mathcal{H}$ is fixed and hence these bounds would be identical for all $i$. Therefore, we expect the $\mathcal{R}_i\left(\mathcal{H}\right)$ to be of similar order for all clients and the impact of $\bm{\alpha}$ in the second term to be negligible. We refer to Konstantinov et al.~\cite[Appendix~B.1]{konstantinov2019robust} for detailed explanations and some examples on this point. In general, we can ignore this term and concentrate on optimizing the remaining terms.

The rest of the terms for the learning bound in Theorem~\ref{thm:main_bound} thus suggest a trade-off between reducing the weighted sum of the empirical losses and minimizing a weighted norm of the weights $\sqrt{\sum_{i=1}^n\frac{\alpha_i^2}{m_i}}$. Note that reducing the weighted sum of the empirical losses will encourage trusting the clients who provide data with the smallest loss but it may lead to large and sparse weights, which will make the model fit only very few local datasets. On the contrary, minimizing the norm will increase the smoothness of the weight distribution, but it may also increase the weighted sum of empirical losses. 
To control this trade-off, we introduce a tuning hyperparameter $\lambda$ to the weighted norm. For the convenience of derivation, the square root on the sum of weights can be removed by tuning $\lambda$.

\subsection{Problem Formulation}
We present our problem formulation in the following. The learning bound derived above suggests minimizing the following objective with respect to the model parameters $\bm{w}$ together with the weights $\bm{\alpha}$:
\begin{align}
\label{eq:fl:main}
 \min_{\bm{w}, \bm{\alpha} } \quad & \sum_{i=1}^N \alpha_i{\hat{\risk}}_{i}(\bm{w})  +  \frac{\lambda}{2}{\sum_{i=1}^N \frac{\alpha_i^2}{m_i}}, \\
\notag s.t. \quad & \bm{\alpha} \in \mathbb{R}_+^n, \bm{1}^{\top}\bm{\alpha} = 1,
\end{align} where $\bm{w}$ is a vector of parameters defining a predictor $h$. Here and after we use notation ${\hat{\risk}}_{i}(\bm{w})$ to replace ${\hat{\risk}}_{i}(h)$, representing the empirical risk of hypothesis $h$ (corresponding to $\mathbf{w}$) on client~$i$. Note that removing the square root of the last term does not affect the optimal solution, since the total sum ${\sum_{i=1}^N \frac{\alpha_i^2}{m_i}}$ is the same in both forms, and the square root is simply a scaling for importance of the total sum. In practice, we can achieve the equivalent purpose by adjusting $\lambda$.

The second term of the objective is small whenever the weights are distributed proportionally to the number of samples of the client. As $\lambda \to \infty$, we have $\alpha_i(\bm{w}) = \frac{m_i}{M}$, which means that all clients are assigned with weights proportional to their number of training samples and the model minimizes the empirical risk over all the data, regardless of the losses of the clients. Thus, the objective becomes the same as the standard \FedAvg in Eq.~(\ref{emp_risk_m}). In contrast, as $\lambda \to 0$, the regularization term in~Eq.~(\ref{eq:fl:main}) vanishes, so that the client with the lowest empirical risk will dominate the objective by setting its weight to 1. $\lambda$ thus acts as a form of regularization by encouraging the usage of information from more clients.
\subsection{Robustness of the Objective}

We now study the optimal weights $\bm{\alpha}$ of the problem in \
Eq.~(\ref{eq:fl:main}) to understand how the objective yields robustness against corrupted data sources. We notice that the objective is a convex quadratic program problem over $\bm{\alpha}$. Given that $\bm{\alpha} = N^{-1}\bm{1}$ is a strictly feasible point, the problem satisfies Slater's condition, which indicates the strong duality of the problem. Thus, the optimal weights $\bm{\alpha}$ can be obtained using the Karush-Kuhn-Tucker (KKT) conditions~\citep{boyd2004convex}. Here we give the closed-form solution in~Theorem~\ref{thm:alpha}. The detailed proof is provided in Appendix~\ref{proof2}.

\begin{restatable}{theorem}{alphathm}
\label{thm:alpha}
		 For any $\bm{w}$, when $\lambda > 0$ and $\{\hat{\risk}_i(\bm{w})\}_{i=1}^N$ are sorted in increasing order: $\hat{\risk}_1(\bm{w}) \le \hat{\risk}_2(\bm{w}) \le ... \le \hat{\risk}_N(\bm{w})$, by setting:
		\begin{equation}
		    \label{nonzero}
    		    p = \argmax_k\{1  + 
          \frac{M_k( \overline{\risk}_k(\bm{w}) -\hat{\risk}_k(\bm{w})) }{\lambda} > 0\},
		\end{equation}
		where $M_k = \sum_{i=1}^k m_i$, 
		\begin{equation}
		    \label{avg}
		    \overline{\risk}_k(h) = \frac{ \sum_{i=1}^{k}m_i \hat{\risk}_i(\bm{w})} {M_k}
		\end{equation}
		is the average loss over the first $k$ clients that have the smallest empirical risks. Then the optimal $\bm{\alpha}$ to the problem (\ref{eq:fl:main}) is given by:
		\begin{equation}
          \label{inner_solution}
          \alpha_i(\bm{w}) = \frac{m_i}{M_p}[ 1  + 
          \frac{M_p( \overline{\risk}_p(\bm{w}) -\hat{\risk}_i(\bm{w})  )}{\lambda}]_{+},
		\end{equation}
		where 	$[\cdot]_+ = max(0, \cdot)$.
\end{restatable}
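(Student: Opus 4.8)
The plan is to solve the convex quadratic program in Eq.~(\ref{eq:fl:main}) over $\bm{\alpha}$ directly through its KKT conditions, which the text has already justified to be both necessary and sufficient (Slater's condition gives strong duality, and the objective is strictly convex in $\bm{\alpha}$, so the minimizer is unique). First I would form the Lagrangian
\[
\Lambda(\bm{\alpha}, \nu, \bm{\mu}) = \sum_{i=1}^N \alpha_i \hat{\risk}_i(\bm{w}) + \frac{\lambda}{2}\sum_{i=1}^N \frac{\alpha_i^2}{m_i} + \nu\Big(\sum_{i=1}^N \alpha_i - 1\Big) - \sum_{i=1}^N \mu_i \alpha_i,
\]
with $\nu \in \R$ attached to $\bm{1}^\top\bm{\alpha}=1$ and $\bm{\mu}\ge\bm{0}$ attached to $\bm{\alpha}\ge\bm{0}$. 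Stationarity $\partial\Lambda/\partial\alpha_i=0$ gives $\alpha_i = \frac{m_i}{\lambda}\big(\mu_i - \nu - \hat{\risk}_i(\bm{w})\big)$. Combined with complementary slackness $\mu_i\alpha_i=0$, this shows that any client with $\alpha_i>0$ has $\mu_i=0$ and hence $\alpha_i = -\frac{m_i}{\lambda}\big(\nu + \hat{\risk}_i(\bm{w})\big)$, which is positive exactly when $\hat{\risk}_i(\bm{w})<-\nu$; conversely $\alpha_i=0$ forces $\hat{\risk}_i(\bm{w})\ge-\nu$ together with $\mu_i = \nu + \hat{\risk}_i(\bm{w})$. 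Because the losses are sorted increasingly, the active set $\{i:\alpha_i>0\}$ must therefore be a prefix $\{1,\dots,p\}$.

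Next I would pin down $\nu$ and read off the closed form. Summing the active-client identity over $i\le p$ and imposing $\sum_{i\le p}\alpha_i=1$ yields $-\nu = \frac{\lambda}{M_p}+\overline{\risk}_p(\bm{w})$; substituting back gives $\alpha_i = \frac{m_i}{M_p}\big[1 + \frac{M_p(\overline{\risk}_p(\bm{w})-\hat{\risk}_i(\bm{w}))}{\lambda}\big]$ for $i\le p$ and $\alpha_i=0$ otherwise, which is exactly Eq.~(\ref{inner_solution}) once the two cases are merged through $[\cdot]_+$.

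The crux is to identify $p$ and confirm that the guessed prefix is consistent, and here I would establish a monotonicity lemma for $g(k)\eqdef 1 + \frac{M_k(\overline{\risk}_k(\bm{w})-\hat{\risk}_k(\bm{w}))}{\lambda}$. A short telescoping computation gives $g(k)-g(k-1) = \frac{M_{k-1}}{\lambda}\big(\hat{\risk}_{k-1}(\bm{w})-\hat{\risk}_k(\bm{w})\big)\le 0$ by the sorting, so $g$ is non-increasing with $g(1)=1>0$; hence $\{k:g(k)>0\}$ is a prefix and its largest index — precisely the $\argmax$ in Eq.~(\ref{nonzero}) — is the correct $p$. I would then verify all KKT conditions at this $p$. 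For $i\le p$, the bound $\hat{\risk}_i(\bm{w})\le\hat{\risk}_p(\bm{w})$ gives $\alpha_i\ge\frac{m_i}{M_p}g(p)>0$, so primal feasibility holds; for an inactive client (when $p<N$), the identity $g(p+1) = 1 + \frac{M_p(\overline{\risk}_p(\bm{w})-\hat{\risk}_{p+1}(\bm{w}))}{\lambda}\le 0$ rearranges to $\hat{\risk}_{p+1}(\bm{w})\ge\overline{\risk}_p(\bm{w})+\frac{\lambda}{M_p}=-\nu$, so the multiplier $\mu_i=\nu+\hat{\risk}_i(\bm{w})\ge 0$ is nonnegative for every $i>p$, giving dual feasibility.

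The main obstacle I anticipate is not any single computation but the guess-and-verify structure: one cannot read $p$ off the stationarity conditions alone, so the argument hinges on the monotonicity lemma to guarantee both that the active set is a contiguous prefix and that the threshold $p$ from Eq.~(\ref{nonzero}) simultaneously keeps every active weight strictly positive and every inactive multiplier nonnegative. Proving that one lemma cleanly is what makes the closed form fall out, and by strict convexity the resulting KKT point is the unique global minimizer, completing the proof.
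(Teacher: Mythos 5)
Your proposal is correct, and its skeleton matches the paper's own proof: form the Lagrangian of the quadratic program in Eq.~(\ref{eq:fl:main}), invoke the KKT conditions (valid by Slater's condition), split clients according to whether the nonnegativity multiplier is active, and pin down the equality multiplier by forcing the active weights to sum to one --- your $-\nu$ is the paper's $\eta$. The genuine difference lies in what you prove versus what the paper merely asserts. The paper's proof passes from the observation $\eta - \hat{\risk}_i(\bm{w}) \ge 0$ directly to Eq.~(\ref{nonzero}) with no argument: it implicitly assumes that the support of the optimal $\bm{\alpha}$ is the prefix $\{1,\dots,p\}$ of the sorted losses, and it never verifies that the $p$ defined by the $\argmax$ is consistent, i.e.\ that this choice simultaneously makes every retained weight nonnegative and every discarded client's multiplier nonnegative. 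Your monotonicity lemma for $g(k) = 1 + M_k\bigl(\overline{\risk}_k(\bm{w}) - \hat{\risk}_k(\bm{w})\bigr)/\lambda$ (non-increasing by the telescoping identity, with $g(1)=1>0$), together with the identity $g(p+1) = 1 + M_p\bigl(\overline{\risk}_p(\bm{w}) - \hat{\risk}_{p+1}(\bm{w})\bigr)/\lambda$, supplies exactly this missing verification: it shows the feasible indices form a prefix, that every $i \le p$ receives a strictly positive weight, and that every $i > p$ has $\mu_i = \nu + \hat{\risk}_i(\bm{w}) \ge 0$, so the constructed point satisfies all KKT conditions. You also settle uniqueness via strict convexity, which the paper leaves unaddressed. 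In short, both routes deliver the same closed form; yours buys a complete guess-and-verify argument where the paper's buys brevity at the cost of real gaps.
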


When $\lambda \in (0, \infty)$, plugging $\alpha_i(\bm{w})$ back into Eq.~(\ref{eq:fl:main}) yields the equivalent concentrated objective
{
\footnotesize
\begin{equation}
     \sum_{i=1}^N  [\frac{m_i}{M_p} + \frac{m_i (\overline{\risk}_p(\bm{w}) - \hat{\risk}_i(\bm{w})}{\lambda}]_{\tiny +}(\hat{\risk}_i(\bm{w}) + \frac{1}{2}[\frac{\lambda}{M_p} + \overline{\risk}_p(\bm{w}) - \hat{\risk}_i(\bm{w})]_{_{_+}}),
\end{equation}
} which consists of $N$ components and each is related to the empirical risk of the corresponding client and the average loss over the first $p$ clients with the smallest losses, which is called the $p$-average loss. An intuitive interpretation is that the $p$ clients act as a consensus group to reallocate the weights and encourage trusting clients that provide empirical losses that are smaller than the average while downweighting the clients with higher losses. Given a suitable $\lambda$, more benign clients will be in the consensus group to dominate the model and exclude the outliers.

In a situation where the majority of clients are benign, $\overline{\risk}_p(\bm{w})$ becomes relatively low as the benign clients achieve the minimum empirical risk. The components with corrupted datasets will be downweighted as they have higher losses. Especially, the $i$-th component becomes zero  when $\hat{\risk}_i(\bm{w}) \ge \frac{\lambda}{M_p} + \overline{\risk}_p(\bm{w})$, which means that a client is considered to be corrupted and does not contribute to the objective if its empirical risk is significantly larger than the $p$-average loss, where the threshold $\frac{\lambda}{M_p} + \overline{\risk}_p(\bm{w})$ is controlled by $\lambda$. From Eq.~(\ref{nonzero}) we can also conclude that the optimal solution has only $p$ non-zero components and the remaining components will be exactly zero.

On the other hand, if the corrupted clients try to bias the model to fit their corrupted datasets, the $p$-average loss $\overline{\risk}_p(\bm{w})$ becomes higher because the model does not fit the samples from the majority. The threshold $\frac{\lambda}{M_p} + \overline{\risk}_p(\bm{w})$ will also be enlarged, which makes $\alpha_i(\bm{w})$ fail to downweight the component with high losses. Thus, the optimization problem in Eq.~(\ref{eq:fl:main}) will exceed the minimum.

\subsection{Blockwise Minimization Algorithm}
To solve the robust learning problem in federated learning settings, we propose an optimization method based on the blockwise updating paradigm, which is guaranteed to converge to a critical point when the parameter set is closed and convex~\citep{grippo2000convergence}. The key idea is to divide the problem into two parts. One sub-problem for estimating the model parameters and the other sub-problem for automatically weighting the importance of client updates. Then we minimize the objective iteratively w.r.t. one variable each time while fixing the other one. 

The pseudocode of the optimization procedure is given in Algorithm~\ref{alg:ARFL}. At the beginning of the algorithm, we initialize $\hat{\risk} = [\hat{\risk}_1,\hat{\risk}_2, ..., \hat{\risk}_N]^\top$ by broadcasting the initial global model $\bm{w_0}$ to each client to measure its training loss and return it to the server.

\begin{algorithm}
\setlength{\abovedisplayskip}{0pt}
\setlength{\belowdisplayskip}{0pt}
\setlength{\abovedisplayshortskip}{0pt}
\setlength{\belowdisplayshortskip}{0pt}
\caption{Optimization of \arfl}
\label{alg:ARFL}
\begin{algorithmic}[1]
    \SUB{Server executes:}
		\STATE Initialize $\bm{w}_0, \hat{\risk}$, $\bm{\alpha}$
		\FOR{each round $t = 1, 2, \dots$} 
		\STATE Select a subset $S_t$ from $N$ clients at random \label{ag:select}
		\STATE Broadcast the global model $\bm{w}_t$ to selected clients $S_t$ \label{ag:broad}
		\FOR{each client $i \in S_t$ \textbf{in parallel}}
		\STATE $\bm{w}_{t+1}^i,\hat{\risk}_i \leftarrow \text{ClientUpdate}(i, \bm{w}_t)$  \label{ag:back}

		\ENDFOR
		\STATE Update $\bm{w}_{t+1}$ according to Eq.~(\ref{eq:fl:aggregation}) \label{ag:agg}
		\STATE Update $\bm{\alpha}$ according to Theorem~\ref{thm:alpha}
		\ENDFOR
		
    \STATE
		
	\SUB{ClientUpdate($i, \bm{w}$):}\ \ \  // \emph{Run on client $i$}
	    \STATE $\risk_i \leftarrow$ (evaluate training loss using training set)
    	\STATE $\mathcal{B} \leftarrow$ (split local training set into batches of size $\lbs$)
    	\FOR{each local epoch $i$ from $1$ to $\lepochs$} \label{local_begin}
    	\FOR{batch $b \in \mathcal{B}$}
    	\STATE $\bm{w} \leftarrow \bm{w} - \eta \triangledown \loss(\bm{w}; b)$
    	\ENDFOR
    	\ENDFOR \label{local_end}
    	\STATE return $\bm{w}$ and $\hat{\risk}_i$ 
\end{algorithmic}
\end{algorithm}

\textbf{Updating $\bm{w}$.} When $\bm{\alpha}$ is fixed, similar to the standard \fedavg approach, at round $t$, the server selects a subset $S_t$ of clients at random (Line~\ref{ag:select}) and broadcasts the global model $\bm{w}_t$ to the selected clients (Line~\ref{ag:broad}). For each client $i$, it firstly evaluates its training loss $\risk_i$ using its local dataset. Then, the model parameters can be updated by local computation with a few steps of SGD (Line~\ref{local_begin}-\ref{local_end}), after which the client uploads the new model parameters $\bm{w}$ along with $\risk_i$ to the server (Line~\ref{ag:back}). While at the aggregation step, the server assembles the global model as:
\begin{equation}\label{eq:fl:aggregation}
    \bm{w}_{t+1} \leftarrow \sum_{i \in S_t} \frac{\alpha_i}{\sum_{i \in S_t}\alpha_i} \bm{w}_{t+1} ^ i.
\end{equation}

\textbf{Updating $\bm{\alpha}$.} When $\bm{w}$ is fixed, we update $\bm{\alpha}$ using Eq.~(\ref{inner_solution}) in Theorem~\ref{thm:alpha}. Intuitively, in order to update $\bm{\alpha}$, the server should broadcast the updated model parameters to all clients to obtain their training loss before updating $\bm{\alpha}$. Unfortunately, such behavior might significantly increase the burden of the communication network. To improve communication efficiency, we only update the losses from those selected clients while keeping the others unchanged.
In other words, the weights of clients are reallocated according to their latest empirical losses. If a client is not selected in the current round, the last updated loss is used instead.

\section{Experimental Results}
\label{experiments}
\label{exper}

\subsection{Experimental Setup}
\label{sess_setup}
\begin{table*}[t]
	\centering
	\caption{Dataset description and parameters}
	{\fontsize{8.0pt}{8.0pt}\selectfont   
	\begin{tabular}{@{}lcccccccccc@{}}
		\toprule
		Dataset \T & \#Classes & \#Clients& \#Samples & i.i.d. & Model used & $l_r$ & $E$ & Batch size & $|S_t|$ & \#Rounds\\ 
		\midrule
		CIFAR-10  \T& 10 & 100 & 60,000& Yes & CNN & 0.01 & 5 & 64 & 20 & 2000\\
		FEMNIST  \T& 62 & 1039 & 236,500& No & CNN & 0.01 & 20 & 64 & 32& 2000\\
		Shakespeare  \T& 80 & 71 & 417,469 &  No & LSTM & 0.6 & 1 & 10 & 16& 100\\ 
		\bottomrule
	\end{tabular}
	\vspace*{2.5mm}
	\label{table:dataset}
	}
\end{table*}

We implement \arfl and the considered baseline methods in TensorFlow~\citep{abadi2016tensorflow}~Version 2.3\footnote{Code is available at \url{https://github.com/lishenghui/arfl}}, simulating a federated learning system with one server and $N$ clients. We perform our experimental evaluation on three datasets that are commonly used in previous work~\citep{li2019fair,wang2020federated,mcmahan2017communication}, 
namely CIFAR-10~\citep{krizhevsky2009learning}, FEMNIST~\citep{cohen2017emnist,caldas2018leaf}, and Shakespeare~\citep{caldas2018leaf,mcmahan2017communication}. Their basic information is listed in Table~\ref{table:dataset}. For the CIFAR-10 dataset, we consider an i.i.d. partition where each local client has approximately the same amount of samples and in proportion to each of the classes. We use the original test set in CIFAR-10 as our global test set for comparing the performance of all methods. For the Shakespeare and FEMNIST datasets, we treat each speaking role or writer as a client and randomly sample subsets of all clients. We assume that data distributions vary among clients in the raw data, and hence we regard this sampling process as non-i.i.d.. The raw data in FEMNIST and Shakespeare is preprocessed using the popular benchmark LEAF~\citep{caldas2018leaf}, where the data on each local client is partitioned into an 80\% training set and a 20\% testing set.
The details of the network models we use in the experiments are as follows:
\begin{itemize}
  \item The model for CIFAR-10 is a Convolutional Neural Network (CNN) chosen from Tensorflow's website\footnote{\url{https://www.tensorflow.org/tutorials/images/cnn}}, which consists of three 3x3 convolution layers  (the first with 32 channels, the second and third with 64, the first two followed by 2x2 max pooling), a fully connected layer with 64 units and ReLu activation, and a final softmax output layer. To improve the performance, data augmentation  (random shift and flips) is used in this dataset~\citep{wang2020federated}.
  \item For the FEMNIST dataset, we train a CNN with two 5x5 convolution layers (the first with 32 channels, the second with 64, each followed by 2x2 max pooling), a fully connected layer with 126 units and ReLu activation, and a final softmax output layer.
  \item For the Shakespeare dataset, we learn a character-level language model to predict the next character over \textsl{the Complete Works of Shakespeare}~\citep{shakespeare2007complete}. The model takes a series of characters as input and embeds each of these into an 8-dimensional space. The embedded features are then processed through two stacked 
Long Short-Term Memory (LSTM) layers, each with 256 nodes and a dropout value of 0.2. Finally, the output of the second LSTM layer is sent to a softmax output layer with one node per character. 
\end{itemize}

For each dataset we consider four different scenarios: 1) \textbf{Normal operation} (\textit{clean}): we use the original datasets without any corruption. 2) \textbf{Label shuffling} (\textit{shuffling}): the labels of all samples are shuffled randomly  in each corrupted client. 3) \textbf{Label flipping} (\textit{flipping}): the labels of all samples are switched to a random one in each corrupted client, which means that all labels of training samples are flipped as the same one for each corrupted client. 4) \textbf{Noisy clients} (\textit{noisy}): for CIFAR-10 and FEMNIST datasets, we normalize the inputs to the interval [0, 1]. In this scenario, for the selected noisy clients we add Gaussian noise to all the pixels, so that $x \leftarrow x + \epsilon $, with $ \epsilon \sim N (0, 0.7)$. Then we normalize the resulting values again to the interval $[0, 1]$. For the Shakespeare dataset, we randomly select half of the characters and shuffle them so that the input sentence might be disordered. For each corruption scenario, we set 30\% and 50\% of the clients to be corrupted clients (i.e. providing corrupted data).

\begin{table*}[t!]
	\centering
	{
		\fontsize{6.0pt}{6.pt}\selectfont   
		\setlength{\tabcolsep}{6pt} 
		\renewcommand{\arraystretch}{1.2} 
				\caption{Averaged test accuracy over five random seeds for \fedavg, \rfa, \MKrum, \CFL and \arfl in four different scenarios. In the \textit{shuffling} and \textit{flipping} scenarios, \arfl significantly outperforms the others. In the \textit{clean} and \textit{noisy} scenario, \fedavg, \rfa, \MKrum and \arfl achieve similar accuracy.}
		\begin{tabular}{|l|l|ll|ll|ll|}
			\hline
			
			\multicolumn{1}{|c|}{\textbf{CIFAR-10}} \T&
			\multicolumn{1}{c|}{\textbf{Clean}} &
			\multicolumn{2}{c|}{\textbf{Shuffling}} &
			\multicolumn{2}{c|}{\textbf{Flipping}} &
			\multicolumn{2}{c|}{\textbf{Noisy}}  \\ \hline
			
			\multicolumn{1}{|c|}{Corr. Per.} \T &
			\multicolumn{1}{c|}{-} &
			\multicolumn{1}{c}{30\%} &
			\multicolumn{1}{c|}{50\%} &
			\multicolumn{1}{c}{30\%} &
			\multicolumn{1}{c|}{50\%} &
			\multicolumn{1}{c}{30\%} &
			\multicolumn{1}{c|}{50\%} \\ \hline
			\FedAvg~\citep{mcmahan2017communication} & $73.59 \pm 0.44$ & $61.17 \pm 1.81$ & $47.00 \pm 7.51$ & $65.01 \pm 2.38$ & $51.75 \pm 7.75$ & $\mathbf{73.75 \pm 0.49}$ & $73.61 \pm 0.53$ \\
			\RFA~\citep{pillutla2019robust}  & $71.36 \pm 0.47$ & $57.86 \pm 3.22$ & $40.26 \pm 9.14$ & $55.47 \pm 5.17$ & $40.91 \pm 11.06$ & $73.74 \pm 0.52$ & $\mathbf{73.69 \pm 0.63}$ \\
			\MKrum~\citep{blanchard2017machine}  & $67.03 \pm 0.93$ & $59.27 \pm 9.34$ & $52.32 \pm 14.90$ & $60.21 \pm 5.73$ & $47.96 \pm 10.25$ & $73.41 \pm 0.69$ & $73.49 \pm 0.49$ \\
			\CFL~\citep{sattler2020byzantine}  & $71.68 \pm 0.36$ & $52.54 \pm 1.71$ & $50.29 \pm 1.95$ & $52.87 \pm 1.07$ & $51.67 \pm 0.92$  & $54.97 \pm 1.14$ & $55.26 \pm 1.96$ \\
			\hline
			\ARFL(ours) & $73.42 \pm 0.40$ & $\mathbf{71.68 \pm 1.01}$ & $\mathbf{69.66 \pm 0.73}$ & $\mathbf{71.78 \pm 0.53}$ & $\mathbf{70.25 \pm 0.56}$  & $73.48 \pm 0.56$ & $73.29 \pm 0.79$ \\
			\hline
		\end{tabular}
		\begin{tabular}{|l|l|ll|ll|ll|}
			\hline
			
			\multicolumn{1}{|c|}{\textbf{FEMNIST}} \T&
			\multicolumn{1}{c|}{\textbf{Clean}} &
			\multicolumn{2}{c|}{\textbf{Shuffling}} &
			\multicolumn{2}{c|}{\textbf{Flipping}} &
			\multicolumn{2}{c|}{\textbf{Noisy}}  \\ \hline
			
			\multicolumn{1}{|c|}{Corr. Per.} \T &
			\multicolumn{1}{c|}{-} &
			\multicolumn{1}{c}{30\%} &
			\multicolumn{1}{c|}{50\%} &
			\multicolumn{1}{c}{30\%} &
			\multicolumn{1}{c|}{50\%} &
			\multicolumn{1}{c}{30\%} &
			\multicolumn{1}{c|}{50\%} \\ \hline
			\FedAvg~\citep{mcmahan2017communication} & $82.12 \pm 0.20$ & $61.91 \pm 21.33$ & $39.69 \pm 20.80$ & $70.19 \pm 10.17$ & $48.53 \pm 23.49$ & $79.94 \pm 0.36$ & $78.27 \pm 0.47$ \\
			\RFA~\citep{pillutla2019robust} & $82.11 \pm 0.32$ & $74.36 \pm 7.52$ & $52.02 \pm 22.51$ & $73.80 \pm 7.49$ & $50.75 \pm 19.91$ & $80.45 \pm 0.30$ & $79.21 \pm 0.41$ \\
			\MKrum~\citep{blanchard2017machine}  & $79.38 \pm 0.41$ & $57.51 \pm 21.17$ & $42.40 \pm 24.84$ & $78.57 \pm 4.83$  & $67.10 \pm 7.35$  & $\mathbf{81.52 \pm 0.53}$ & $\mathbf{79.80 \pm 0.22}$ \\
			\CFL ~\citep{sattler2020byzantine}  & $82.18 \pm 0.30$ & $81.24 \pm 0.47$ & $36.03 \pm 36.38$ & $81.22 \pm 0.36$ & $65.54 \pm 26.94$ & $80.13 \pm 0.70$ & $79.21 \pm 0.64$ \\
			\hline
			\ARFL(ours)   & $\mathbf{82.32 \pm 0.19}$ & $\mathbf{81.60 \pm 0.31}$ & $\mathbf{81.35 \pm 0.43}$  & $\mathbf{81.87 \pm 0.22}$ & $\mathbf{81.30 \pm 0.24}$  & $80.71 \pm 0.28$ & $79.40 \pm 0.45$ \\
			\hline
		\end{tabular}
		\begin{tabular}{|l|l|ll|ll|ll|}
			\hline
			
			\multicolumn{1}{|c|}{\textbf{Shakespeare}} \T&
			\multicolumn{1}{c|}{\textbf{Clean}} &
			\multicolumn{2}{c|}{\textbf{Shuffling}} &
			\multicolumn{2}{c|}{\textbf{Flipping}} &
			\multicolumn{2}{c|}{\textbf{Noisy}}  \\ \hline
			
			\multicolumn{1}{|c|}{Corr. Per.} \T &
			\multicolumn{1}{c|}{-} &
			\multicolumn{1}{c}{30\%} &
			\multicolumn{1}{c|}{50\%} &
			\multicolumn{1}{c}{30\%} &
			\multicolumn{1}{c|}{50\%} &
			\multicolumn{1}{c}{30\%} &
			\multicolumn{1}{c|}{50\%} \\ \hline
			\FedAvg~\citep{mcmahan2017communication} & $53.80 \pm 0.33$ & $51.98 \pm 0.48$ & $47.70 \pm 4.96$ & $52.08 \pm 0.39$ & $41.85 \pm 16.18$ & $51.85 \pm 0.56$ & $50.43 \pm 1.19$ \\
			\RFA~\citep{pillutla2019robust}  & $\mathbf{54.27 \pm 0.41}$ & $50.16 \pm 1.28$ & $32.49 \pm 13.81$ & $50.50 \pm 1.02$ & $23.84 \pm 21.78$ & $\mathbf{52.17 \pm 0.50}$ & $50.69 \pm 1.04$ \\
			\MKrum~\citep{blanchard2017machine}  & $50.81 \pm 0.85$ & $40.38 \pm 7.44$ & $24.46 \pm 6.88$ & $44.95 \pm 2.43$ & $16.11 \pm 15.46$ & $48.19 \pm 0.40$ & $45.67 \pm 0.46$ \\
			\CFL ~\citep{sattler2020byzantine} & $54.01 \pm 0.34$ & $49.76 \pm 4.47$ & $43.68 \pm 12.68$ & $51.09 \pm 1.36$ & $37.30 \pm 19.76$ & $51.98 \pm 1.03$ & $50.38 \pm 1.39$ \\
			\hline
			\ARFL(ours)  & $53.52 \pm 0.32$ & $\mathbf{52.85 \pm 0.49}$ & $\mathbf{51.61 \pm 0.68}$  & $\mathbf{52.82 \pm 0.48}$ & $\mathbf{51.74 \pm 0.69}$  & $52.09 \pm 1.27$ & $\mathbf{50.98 \pm 0.75}$ \\
			\hline
		\end{tabular}
		\label{table:summary}
	}
\end{table*}

We empirically tune the hyper-parameters on \arfl and use the same values in all experiments of each dataset. We use the parameter setups in Table~\ref{table:dataset}, unless specified otherwise. Following the standard setup, we use SGD and train for $E$ local epochs with local learning rate $l_r$. A shared global model is trained by all participants, a subset $S_t$ is randomly selected in each round of local training, and $|S_t|$ is the size of $S_t$. By default, we use a large $\lambda$ $(\lambda = 10000 \times M)$ for clean data, and use a relatively small $\lambda$ $(\lambda = M)$ for all corruptions, where $M$ is the total amount of training samples. We repeat every experiment five times with different random seeds for data corruption and client selection, and evaluate the accuracy of the learned model with the clean test set.

We compare the performance of \arfl with the following state-of-the-art solutions: 
\begin{itemize}
    \item \textbf{FedAvg}~\citep{mcmahan2017communication}. The standard Federated Averaging aggregation approach that just calculates the weighted average of the parameters from local clients.
    \item \textbf{RFA}~\citep{pillutla2019robust}. A robust aggregation approach that minimizes the weighted Geometric Median (GM) of the parameters from local clients. A smoothed Weiszfeld’s algorithm is used to compute the approximate GM.
    

    \item \textbf{MKrum (Multi-Krum)}~\citep{blanchard2017machine}. A Byzantine tolerant aggregation rule. Note that this approach tolerates some Byzantine failures such as completely arbitrary behaviors from local updates.
    \item \textbf{CFL}~\citep{sattler2020byzantine}. A Clustered Federated Learning (CFL) approach that separates the client population into different groups based on the pairwise cosine similarities between their parameter updates, where the clients are partitioned into two groups, i.e., benign clients and corrupted clients. 
\end{itemize}


\subsection{Robustness and Convergence}

\begin{figure*}
	\begin{center}
	\begin{subfigure}[b]{0.5\linewidth}
         \centering
         \includegraphics[width=\textwidth]{./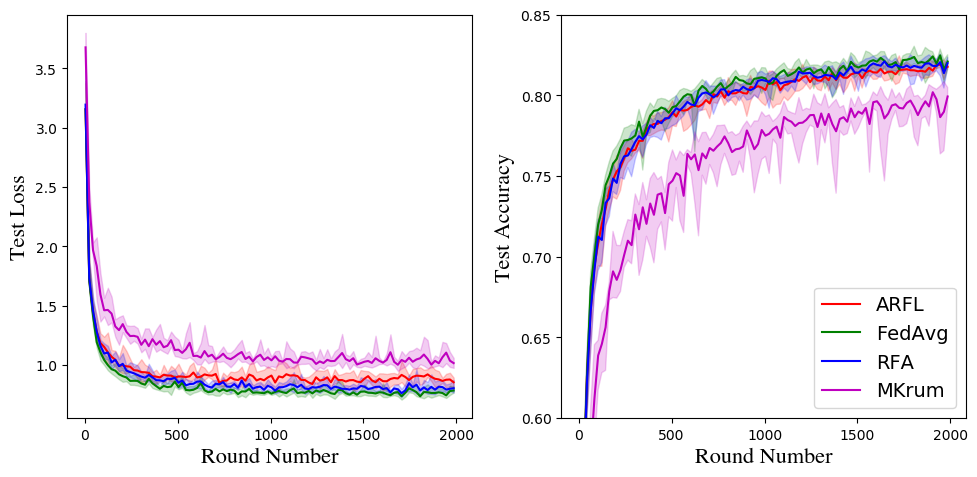}
         \caption{Clean}
     \end{subfigure}%
     \begin{subfigure}[b]{0.5\linewidth}
         \centering
         \includegraphics[width=\textwidth]{./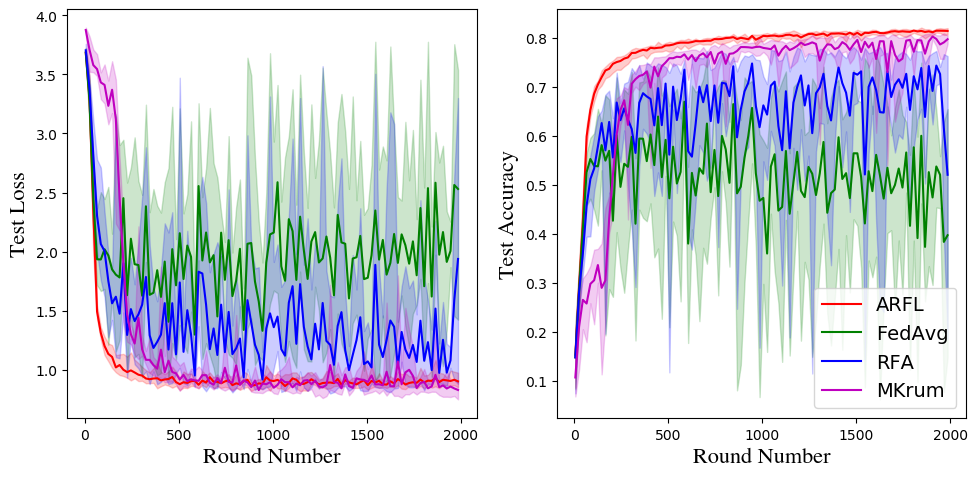}
         \caption{Shuffling}
     \end{subfigure}
     \begin{subfigure}[b]{0.5\linewidth}
         \centering
         \includegraphics[width=\textwidth]{./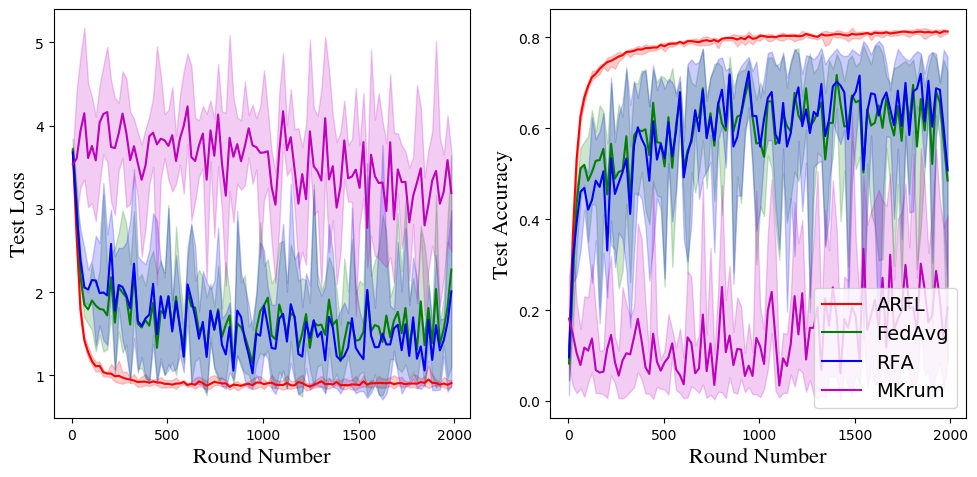}
         \caption{Flipping}
     \end{subfigure}%
     \begin{subfigure}[b]{0.5\linewidth}
         \centering
         \includegraphics[width=\textwidth]{./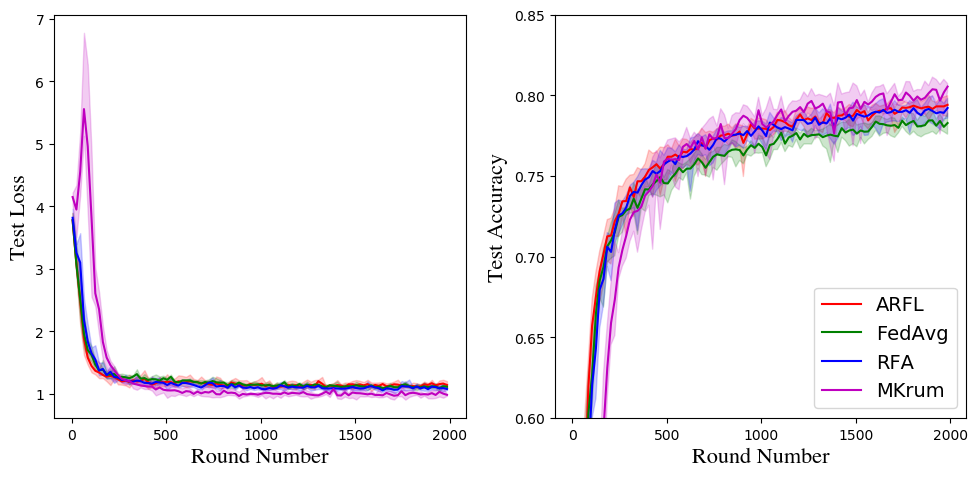}
         \caption{Noisy}
         \label{fig:noisy_con}
     \end{subfigure}
		\vskip -0.05in \caption{Test loss and accuracy vs.
			round number for \fedavg, \rfa, \MKrum and \arfl on the FEMNIST dataset with $50\%$ clients with different corruption scenarios. In the \textit{shuffling} and \textit{flipping} corruption scenarios, \arfl converges to the highest accuracy among the four approaches.}
		\label{fig:convergence}
	\end{center}
\end{figure*} 

Firstly, we show that \arfl is a more robust solution by comparing the average test accuracy in Table~\ref{table:summary}. The results show that \arfl is robust in all data corruption scenarios and corruption levels. It achieves the highest test accuracy in most scenarios compared with the other approaches. \arfl achieves significantly higher test accuracy in the \textit{shuffling} and \textit{flipping} corruption scenarios compared with all the existing methods, which is especially noticeable in the case of \textit{flipping} corruption with the level of $50\%$, where the test accuracy for the CIFAR-10, FEMNIST and Shakespeare datasets are $70.25\%, 81.30\%$ and $51.74\%$, which are $18.5\%, 14.2\%$, and $9.89\%$ higher than that of the best of existing methods, respectively. In the \textit{clean} and \textit{noisy} scenarios, \arfl's test accuracy is very close to the best method in the comparison. 

As expected, \fedavg's performance is significantly affected by the presence of corrupted clients, especially in \textit{shuffling} and \textit{flipping} scenarios. Furthermore, \MKrum also shows poor performance in \textit{shuffling} and \textit{flipping} scenarios of all datasets. \rfa works well for the FEMNIST dataset, but worse than \fedavg in the \textit{shuffling} and \textit{flipping} scenarios for the CIFAR-10 and Shakespere datasets. It is also interesting to observe that \CFL works well for the FEMNIST and Shakespeare datasets under $30\%$ corruption level, but the accuracy decreases significantly when the corruption level is $50\%$. The reason is that when half of the clients are corrupted, \CFL fails to identify which group of clients are corrupted. These results demonstrate that \arfl offers better performance than the existing approaches across the corruption scenarios we consider. Note that our approach can handle even higher corruption rates in those scenarios. For example, using the FEMNIST data set with 70\% corrupted clients we still achieve an accuracy above $79\%$. However, it is also noticeable that if multiple clients try to bias their data to the same distribution (i.e., colluding corruption), our approach is unable to handle such a high corruption rate.

Next, we study the convergence of the approaches by comparing the test loss and accuracy of the global model versus the number of training rounds in Figure~\ref{fig:convergence}, where 50\% of the clients are corrupted. As discussed before, \CFL is unable to handle such a high corruption level. Therefore we only compare the remaining four approaches. The shaded areas denote the minimum and maximum values over five repeated runs. The figure shows that all approaches converge to a good solution in the \textit{clean} and \textit{noisy} scenarios. However, in the \textit{clean} scenario, the test loss of \arfl is slightly higher than the others. The reason is that the global model could bias towards some of the local updates, which can be avoided by increasing $\lambda$. Furthermore, all the \rfa, \fedavg and \MKrum approaches diverge in the \textit{shuffling} and \textit{flipping} corruption scenarios, which indicates that the local data corruption harms the global model during their training process. On the contrary, our \arfl method is able to converge with high accuracy, since it is able to lower the contribution of the corrupted clients. In the \textit{clean} and \textit{noisy} scenarios, we observe that \MKrum converges slower, as it only uses a subset of selected updates to aggregate the global model.

\begin{figure}
	\begin{center}
		\begin{minipage}{.48\textwidth}
			\centering
			\includegraphics[width=\linewidth]{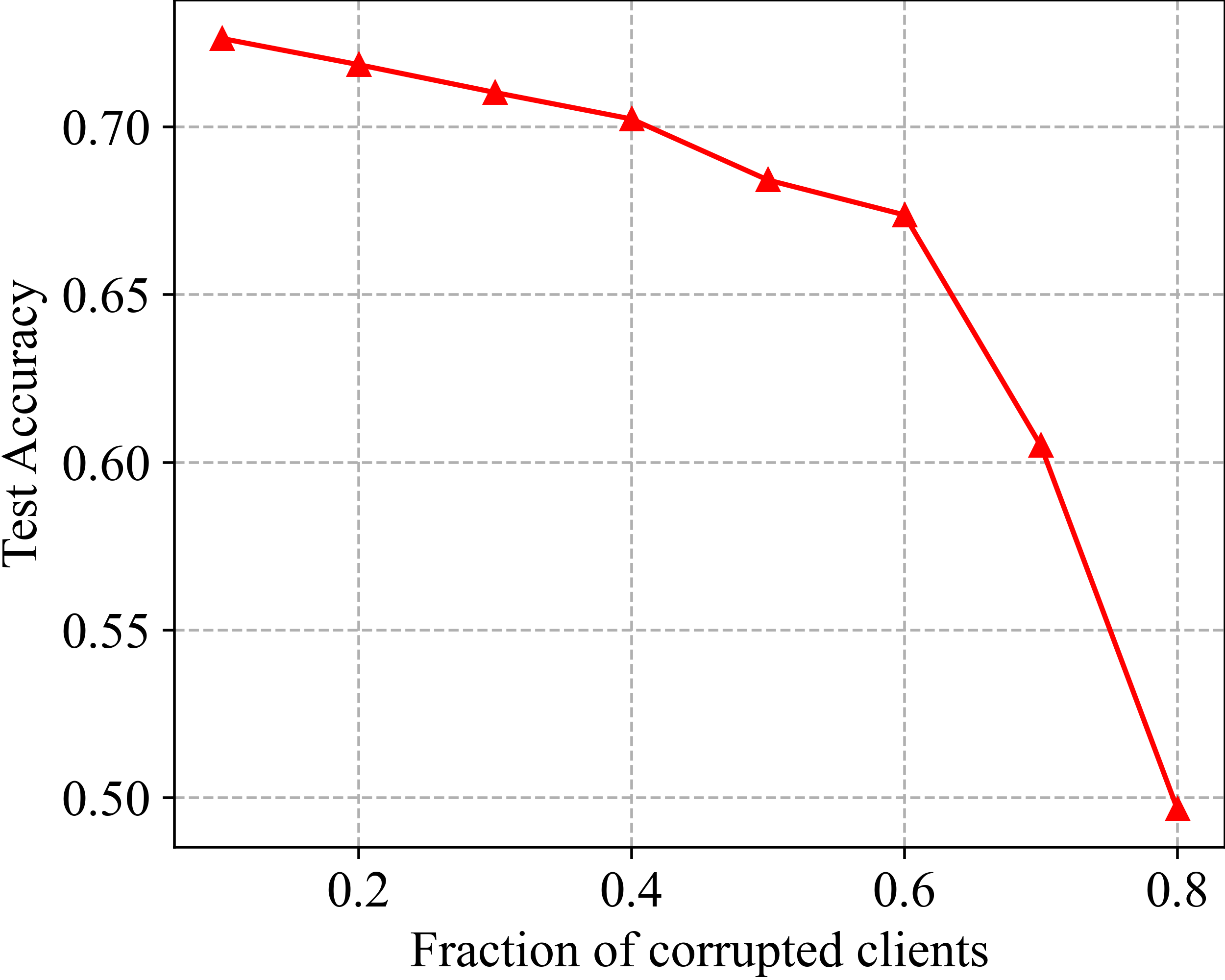}
			\caption{Impact of fraction of corrupted clients on test accuracy of \arfl using CIFAR-10. The accuracy decreases as the fraction of corrupted clients increases, especially when 60\% clients are corrupted.}
			\label{maximum}
		\end{minipage}%
		\hspace{0.3cm}
		\begin{minipage}{.48\textwidth}
			\centering
			\includegraphics[width=\linewidth]{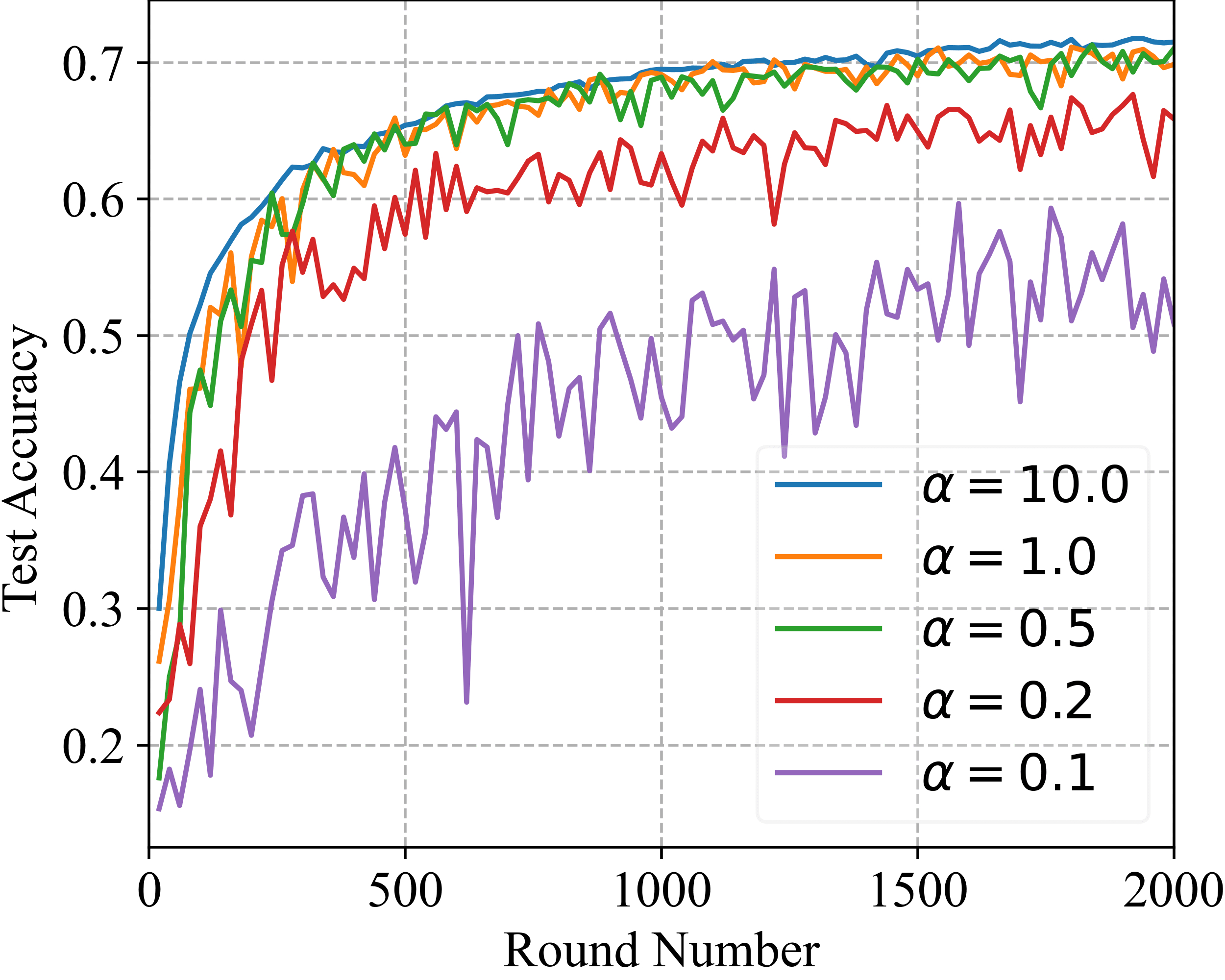}
			\caption{Learning curves of \ARFL on CIFAR-10 with different non-i.i.d settings, where smaller $\alpha$ indicates stronger non-i.i.d. data distribution. The curves fluctuates as we increase the degree of non-i.i.d.}
			\label{noniid}
		\end{minipage}%
	\end{center}
\end{figure} 

\subsection{Impact of the Fraction of Corrupted Clients}

To demonstrate the effect of the fraction of corrupted clients to the robustness of \arfl, we run an experiment on CIFAR-10 with different number of corrupted clients and apply \textit{shuffling} to corrupt the clients. As shown in Fig.~\ref{maximum}, the performance decreases with the fraction of corrupted clients. Noticeably, the accuracy does not significantly change until more than 60\% clients are corrupted. Similar phenomenon has been shown by Li et al. in a previous work~\citep{li2020ditto}, which achieves robustness by personalized federated training. However, different from the solution in~\citep{li2020ditto}, our \arfl only learns a single global model, without the need of extra training overhead for personalized models.

\subsection{Impact of Data Heterogeneity}
In order to study the impact of non-i.i.d. data, i.e., the extent of data heterogeneity among clients, we simulate a non-i.i.d. partition of CIFAR-10, for which the number of data points and class proportions are unbalanced, and 30\% of the clients are corrupted with \textit{shuffling}. Following prior arts~\citep{hsu2019measuring,lin2020ensemble}, we model non-i.i.d. data distributions by using a Dirichlet distribution $\bm{P}_k \sim Dir_{_N}(\alpha)$ and by allocating a $\bm{P}_{k, i}$ proportion of the training instances of class $k$ to local client $i$,  in which a smaller $\alpha$ indicates stronger non-i.i.d. data distributions. As shown in Fig.~\ref{noniid}, when the data is less heterogeneous (e.g., $\alpha = 10.0$), the learning curve increases more steadily,  and therefore results in higher accuracy at the end of the training. On the contrary, the learning curve fluctuates over the training rounds when $\alpha$ is small. One explanation for this is that it becomes hard to induce a consensus model for the benign clients if their data distributions are significantly different, thus the corrupted clients can damage the global model more easily as they are not down-weighted properly.

\subsection{Tuning $\lambda$}
\label{sec_lambda}


\begin{figure}
	\begin{center}
	\begin{subfigure}[b]{0.5\linewidth}
         \centering
         \includegraphics[width=\textwidth]{./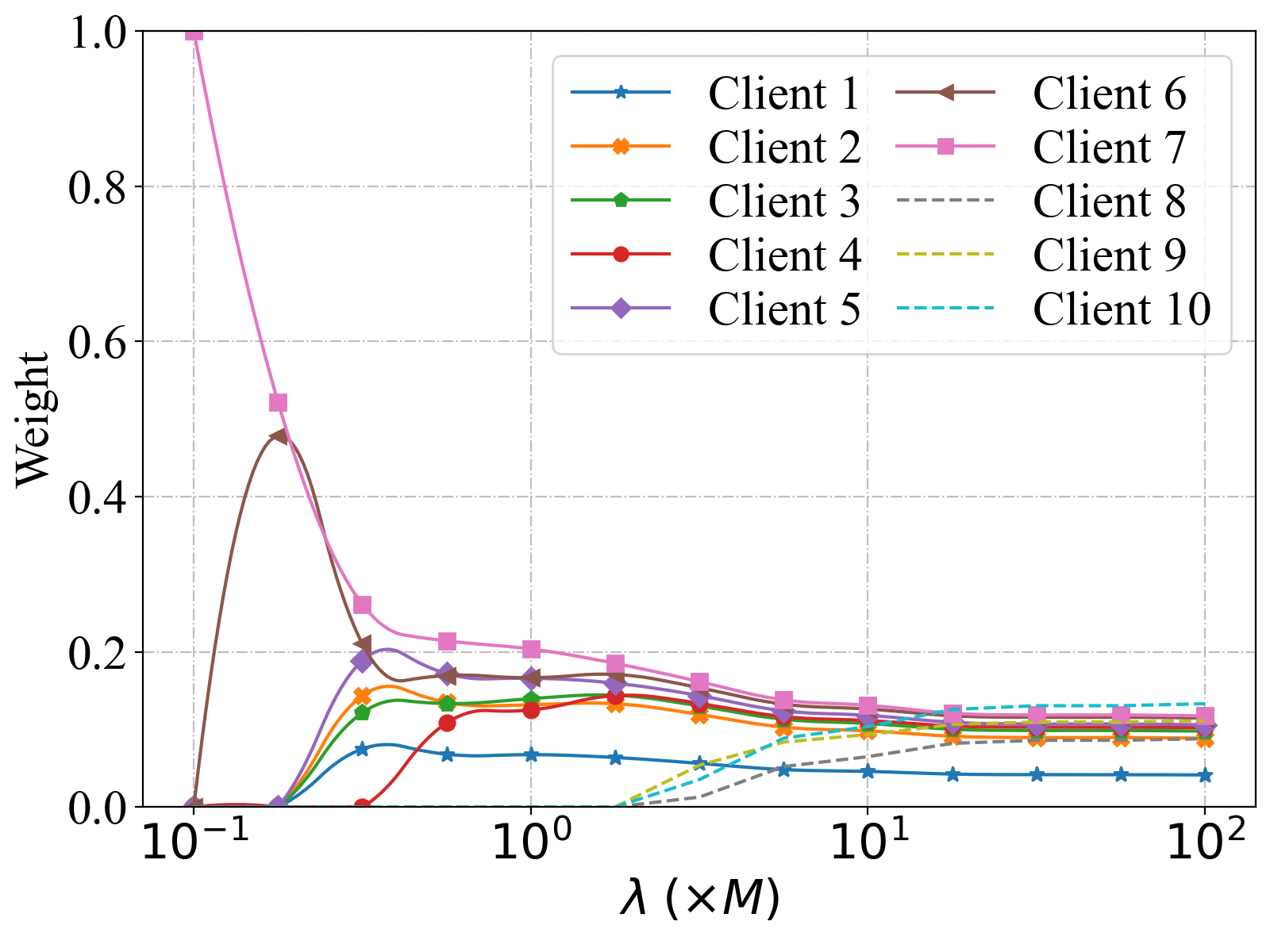}
         \caption{Weights v.s. $\lambda$}
     \end{subfigure}%
     \begin{subfigure}[b]{0.5\linewidth}
         \centering
         \includegraphics[width=\textwidth]{./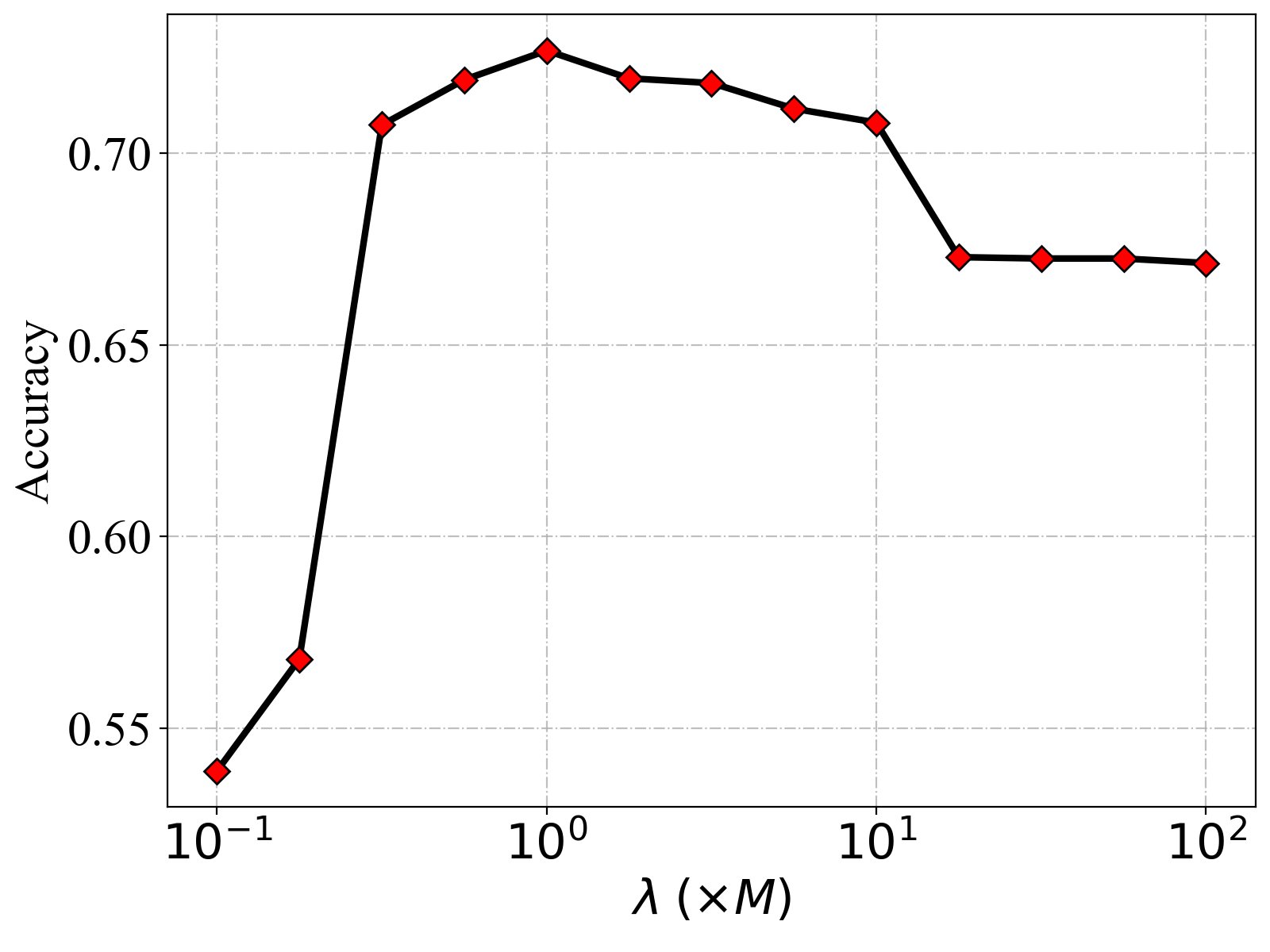}
         \caption{Accuracy v.s. $\lambda$}
     \end{subfigure}
		\caption{ Effects of $\lambda$ on weights and model accuracy on the CIFAR-10 dataset. The corrupted clients (dashed lines) are  zero-weighted when $\lambda \le 2.5 \times M$, and the accuracy reaches its peak when $\lambda = 1 \times M$.}
		\label{fig:lambda}
	\end{center}
\end{figure} 

As mentioned before, $\lambda$ in the objective should be tuned in order to provide a trade-off between robustness and average accuracy. Here we conduct further experiments on the CIFAR-10 dataset to study the effects of  $\lambda$. 
The dataset is partitioned into $N=10$ clients by sampling $\bm{P}_k \sim Dir_{_N}(0.5)$ and allocating a $\bm{P}_{k, i}$ proportion of the training instances of class $k$ to local client $i$, where three clients are corrupted by shuffling their labels randomly (Client 8-10). We use the original test set in CIFAR-10 as the global test set. We train the model for 1000 rounds where each client runs one epoch of SGD on their training set before each aggregation, where $\lambda$ is set in the range of $[10^{-1} \times M, 10^{2} \times M]$. All the other settings are the same as in Sec.~\ref{sess_setup}.

Fig.~\ref{fig:lambda}(a) shows the optimized weights as a function of $\lambda$ on the CIFAR-10 dataset. It is readily apparent that $\bm{\alpha}$ has only one non-zero element (Client 7) for small $\lambda$ and all elements of $\bm{\alpha}$ come to certain non-zero values for large $\lambda$. In between these two extremes, we obtain sparse solutions of $\bm{\alpha}$ in which only a part of elements have non-zero values.
It is also noticeable that all the corrupted clients (dashed lines) are zero-weighted when $\lambda \le 2.5 \times M$, which means that they make no contribution when the server aggregates the updated local models.

In Fig.~\ref{fig:lambda}(b), we plot the model accuracy as a function of $\lambda$, showing that the model achieves relatively low 
accuracy for small $\lambda$, which demonstrates that extremely sparse weights are not favorable under this non-i.i.d. data setting.
The reason behind this is that the model finally fits only one local dataset without considering data from other clients. The accuracy increases as more benign clients are upweighted and contribute their local updates to the global model. The optimal value for $\lambda$ is $1.0 \times M$ in this experiment, where all the benign clients have non-zero weights, while all the corrupted clients have zero weights. However, as $\lambda$ further increases, the global model is harmed by the corrupted clients as they gain adequate weights, which leads to lower accuracy.

\subsection{Auto-weighting Analysis}

\begin{figure}
	\begin{center}
	\begin{minipage}{.48\textwidth}
        \centering
        \includegraphics[width=\linewidth]{./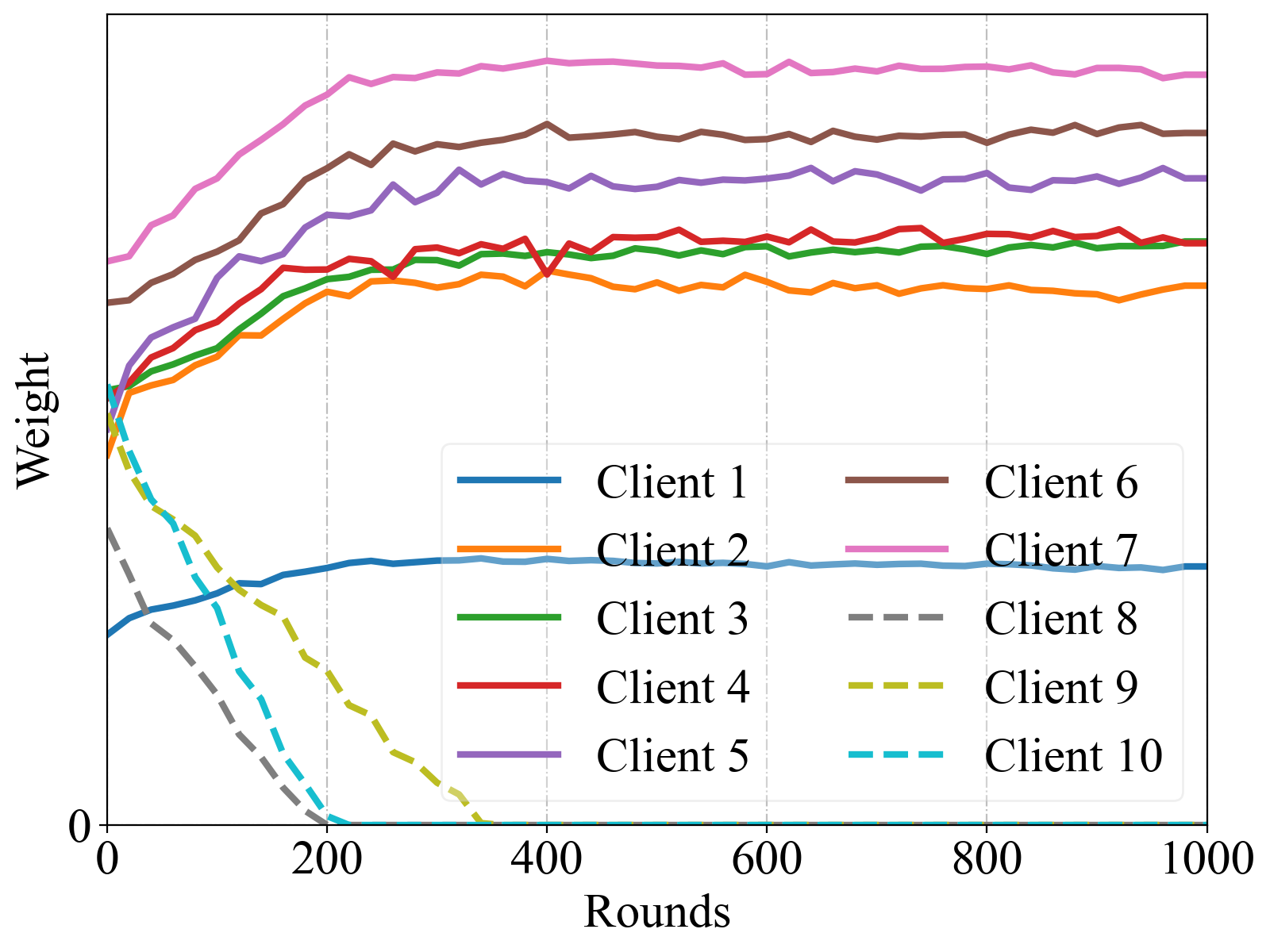}
		\caption{Visualization of the weights v.s. round number of \ARFL. All corrupted clients (dashed lines) are  zero-weighted after 350 rounds of training.}
		\label{autoweight}
    \end{minipage}%
    \hspace{0.3cm}
    \begin{minipage}{.48\textwidth}
        \centering
        \includegraphics[width=\textwidth]{./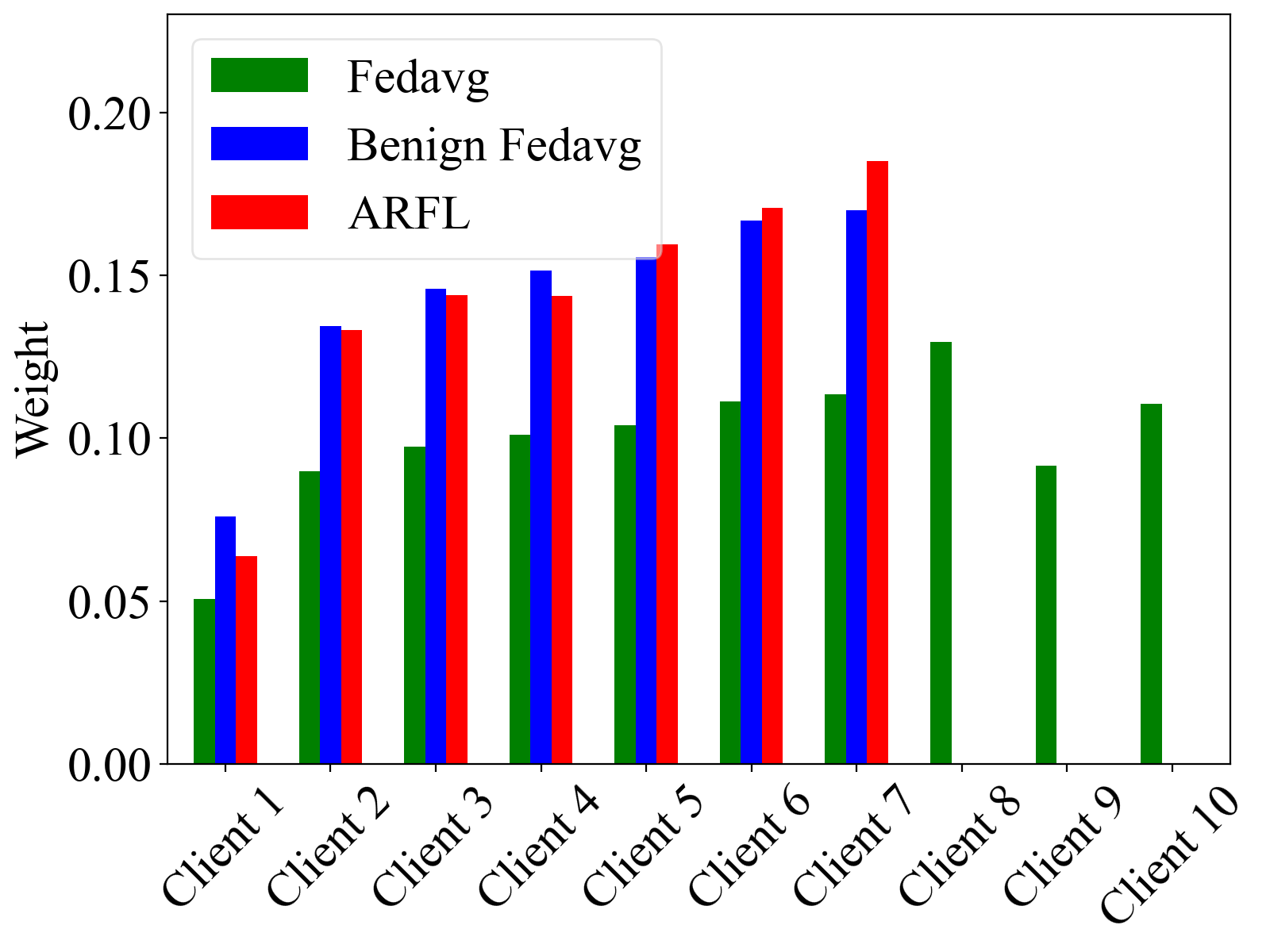}
         \caption{Weights comparison between ARFL and FedAvg.
ARFL’s weights distributed similar to those of FedAvg with
only benign clients}
		\label{weights}
    \end{minipage}%
	\end{center}
\end{figure} 

In this part, we investigate the auto-weighting process during training in \ARFL when $\lambda = 1.0 \times M$, where we keep the setup the same as the previous above. As shown in Fig.~\ref{autoweight}, our approach successfully downweights all the three corrupted clients, for which the weights become zero after 350 rounds of training. 

We next compare the reallocated weights with the standard \fedavg, where the weights are fixed as $\alpha_i = \frac{m_i}{M}$, and \fedavg with only benign clients (Benign \FedAvg), where $\alpha_i = \frac{m_i}{M_{\mathcal{B}}}$ for benign clients, and $\alpha_i = 0$ for corrupted clients. Fig.~\ref{weights} shows that the learned weight distribution (in red) of \ARFL is approximated to the distribution considering only benign clients (in blue). We conclude that our approach can automatically reweight the clients and approximate the mixture distribution to the benign uniform distribution during the model training process, even when the centralized server is agnostic to the local corruptions.

\section{Limitations}

The first limitation is the fairness. When a small $\lambda$ is chosen, the clients are treated unfairly, e.g., a client with  data that is difficult to learn (but not due to corruption) would carry less weight. This happens in some strongly non-i.i.d cases where the distributions among sources are inherently different.
However, we argue that robustness and fairness are two competing targets~\citep{li2020ditto}, it is technically difficult to distinguish between ``corrupted" and ``just different" (but not corrupted) data sources if the data is strongly non-i.i.d., which we leave it for future work. Nevertheless, we show in our experiments (Sec.~\ref{experiments}) that our approach performs well in some general non-i.i.d. settings. 

Another limitation is on the assumption of corruption strategies. Colluding corruptions among clients, e.g., multiple clients are trying to bias their data distribution to the same target, may bring risk to the robustness of the objective. In addition, our approach is not designed to handle malicious attacks on the model parameters, i.e., only data corruption scenarios are considered. Our work is based on the assumption that all clients are honest during local training and communication steps, which means that the training loss should reflect the real empirical loss of the global model on the local data. If some corrupted clients cheat the server by giving extremely small fake losses, then they will be allocated with high weights and dominate the training accordingly. 

\section{Conclusions and Future work}
In this article, we proposed Auto-weighted Robust Federated Learning (\arfl), a novel approach that automatically re-weights the local updates to lower the contribution of corrupted clients who provide low-quality updates to the global model. Experimental results on benchmark datasets corroborate the competitive performance of \arfl compared to the state-of-the-art methods under different data corruption scenarios. Our future work will focus on robust aggregation without the losses provided by clients since evaluating training loss from local clients could also be a potential overhead. Also, we will explore the possibility of further improving the robustness of federated learning and coping with higher fraction of corrupted clients. Furthermore, we plan to extend \arfl to a more general federated learning approach and make it robust against both model poisoning and data corruption.


\bibliographystyle{ACM-Reference-Format}
\bibliography{reference}


\begin{thebibliography}{52}


\ifx \showCODEN    \undefined \def \showCODEN     #1{\unskip}     \fi
\ifx \showDOI      \undefined \def \showDOI       #1{#1}\fi
\ifx \showISBNx    \undefined \def \showISBNx     #1{\unskip}     \fi
\ifx \showISBNxiii \undefined \def \showISBNxiii  #1{\unskip}     \fi
\ifx \showISSN     \undefined \def \showISSN      #1{\unskip}     \fi
\ifx \showLCCN     \undefined \def \showLCCN      #1{\unskip}     \fi
\ifx \shownote     \undefined \def \shownote      #1{#1}          \fi
\ifx \showarticletitle \undefined \def \showarticletitle #1{#1}   \fi
\ifx \showURL      \undefined \def \showURL       {\relax}        \fi
\providecommand\bibfield[2]{#2}
\providecommand\bibinfo[2]{#2}
\providecommand\natexlab[1]{#1}
\providecommand\showeprint[2][]{arXiv:#2}

\bibitem[\protect\citeauthoryear{Abadi, Agarwal, Barham, Brevdo, Chen, Citro,
  Corrado, Davis, Dean, Devin, et~al\mbox{.}}{Abadi et~al\mbox{.}}{2016}]%
        {abadi2016tensorflow}
\bibfield{author}{\bibinfo{person}{Mart{\'\i}n Abadi}, \bibinfo{person}{Ashish
  Agarwal}, \bibinfo{person}{Paul Barham}, \bibinfo{person}{Eugene Brevdo},
  \bibinfo{person}{Zhifeng Chen}, \bibinfo{person}{Craig Citro},
  \bibinfo{person}{Greg~S Corrado}, \bibinfo{person}{Andy Davis},
  \bibinfo{person}{Jeffrey Dean}, \bibinfo{person}{Matthieu Devin},
  {et~al\mbox{.}}} \bibinfo{year}{2016}\natexlab{}.
\newblock \showarticletitle{Tensorflow: Large-scale machine learning on
  heterogeneous distributed systems}.
\newblock \bibinfo{journal}{\emph{arXiv preprint arXiv:1603.04467}}
  (\bibinfo{year}{2016}).
\newblock


\bibitem[\protect\citeauthoryear{Bhagoji, Chakraborty, Mittal, and
  Calo}{Bhagoji et~al\mbox{.}}{2019}]%
        {bhagoji2019analyzing}
\bibfield{author}{\bibinfo{person}{Arjun~Nitin Bhagoji},
  \bibinfo{person}{Supriyo Chakraborty}, \bibinfo{person}{Prateek Mittal},
  {and} \bibinfo{person}{Seraphin Calo}.} \bibinfo{year}{2019}\natexlab{}.
\newblock \showarticletitle{Analyzing federated learning through an adversarial
  lens}. In \bibinfo{booktitle}{\emph{International Conference on Machine
  Learning}}. PMLR, \bibinfo{pages}{634--643}.
\newblock


\bibitem[\protect\citeauthoryear{Blanchard, Guerraoui, Stainer,
  et~al\mbox{.}}{Blanchard et~al\mbox{.}}{2017}]%
        {blanchard2017machine}
\bibfield{author}{\bibinfo{person}{Peva Blanchard}, \bibinfo{person}{Rachid
  Guerraoui}, \bibinfo{person}{Julien Stainer}, {et~al\mbox{.}}}
  \bibinfo{year}{2017}\natexlab{}.
\newblock \showarticletitle{Machine learning with adversaries: Byzantine
  tolerant gradient descent}. In \bibinfo{booktitle}{\emph{Advances in Neural
  Information Processing Systems}}. \bibinfo{pages}{119--129}.
\newblock


\bibitem[\protect\citeauthoryear{Bousquet, Boucheron, and Lugosi}{Bousquet
  et~al\mbox{.}}{2003}]%
        {bousquet2003introduction}
\bibfield{author}{\bibinfo{person}{Olivier Bousquet},
  \bibinfo{person}{St{\'e}phane Boucheron}, {and} \bibinfo{person}{G{\'a}bor
  Lugosi}.} \bibinfo{year}{2003}\natexlab{}.
\newblock \showarticletitle{Introduction to statistical learning theory}. In
  \bibinfo{booktitle}{\emph{Summer School on Machine Learning}}. Springer,
  \bibinfo{pages}{169--207}.
\newblock


\bibitem[\protect\citeauthoryear{Boyd, Boyd, and Vandenberghe}{Boyd
  et~al\mbox{.}}{2004}]%
        {boyd2004convex}
\bibfield{author}{\bibinfo{person}{Stephen Boyd}, \bibinfo{person}{Stephen~P
  Boyd}, {and} \bibinfo{person}{Lieven Vandenberghe}.}
  \bibinfo{year}{2004}\natexlab{}.
\newblock \bibinfo{booktitle}{\emph{Convex optimization}}.
\newblock \bibinfo{publisher}{Cambridge university press}.
\newblock


\bibitem[\protect\citeauthoryear{Caldas, Wu, Li, Kone{\v{c}}n{\`y}, McMahan,
  Smith, and Talwalkar}{Caldas et~al\mbox{.}}{2018}]%
        {caldas2018leaf}
\bibfield{author}{\bibinfo{person}{Sebastian Caldas}, \bibinfo{person}{Peter
  Wu}, \bibinfo{person}{Tian Li}, \bibinfo{person}{Jakub Kone{\v{c}}n{\`y}},
  \bibinfo{person}{H~Brendan McMahan}, \bibinfo{person}{Virginia Smith}, {and}
  \bibinfo{person}{Ameet Talwalkar}.} \bibinfo{year}{2018}\natexlab{}.
\newblock \showarticletitle{Leaf: A benchmark for federated settings}.
\newblock \bibinfo{journal}{\emph{arXiv preprint arXiv:1812.01097}}
  (\bibinfo{year}{2018}).
\newblock


\bibitem[\protect\citeauthoryear{Cao, Fang, Liu, and Gong}{Cao
  et~al\mbox{.}}{2021}]%
        {cao2021fltrust}
\bibfield{author}{\bibinfo{person}{Xiaoyu Cao}, \bibinfo{person}{Minghong
  Fang}, \bibinfo{person}{Jia Liu}, {and} \bibinfo{person}{Neil~Zhenqiang
  Gong}.} \bibinfo{year}{2021}\natexlab{}.
\newblock \showarticletitle{FLTrust: Byzantine-robust Federated Learning via
  Trust Bootstrapping}. In \bibinfo{booktitle}{\emph{ISOC Network and
  Distributed System Security Symposium (NDSS)}}.
\newblock


\bibitem[\protect\citeauthoryear{Chen, Suresh, Mathews, Wong, Allauzen,
  Beaufays, and Riley}{Chen et~al\mbox{.}}{2019}]%
        {chen2019federated}
\bibfield{author}{\bibinfo{person}{Mingqing Chen},
  \bibinfo{person}{Ananda~Theertha Suresh}, \bibinfo{person}{Rajiv Mathews},
  \bibinfo{person}{Adeline Wong}, \bibinfo{person}{Cyril Allauzen},
  \bibinfo{person}{Fran{\c{c}}oise Beaufays}, {and} \bibinfo{person}{Michael
  Riley}.} \bibinfo{year}{2019}\natexlab{}.
\newblock \showarticletitle{Federated Learning of N-Gram Language Models}. In
  \bibinfo{booktitle}{\emph{Proceedings of the 23rd Conference on Computational
  Natural Language Learning (CoNLL)}}. \bibinfo{pages}{121--130}.
\newblock


\bibitem[\protect\citeauthoryear{Chen, Su, and Xu}{Chen et~al\mbox{.}}{2017}]%
        {chen2017distributed}
\bibfield{author}{\bibinfo{person}{Yudong Chen}, \bibinfo{person}{Lili Su},
  {and} \bibinfo{person}{Jiaming Xu}.} \bibinfo{year}{2017}\natexlab{}.
\newblock \showarticletitle{Distributed statistical machine learning in
  adversarial settings: Byzantine gradient descent}.
\newblock \bibinfo{journal}{\emph{Proceedings of the ACM on Measurement and
  Analysis of Computing Systems}} \bibinfo{volume}{1}, \bibinfo{number}{2}
  (\bibinfo{year}{2017}), \bibinfo{pages}{1--25}.
\newblock


\bibitem[\protect\citeauthoryear{Cohen, Afshar, Tapson, and Van~Schaik}{Cohen
  et~al\mbox{.}}{2017}]%
        {cohen2017emnist}
\bibfield{author}{\bibinfo{person}{Gregory Cohen}, \bibinfo{person}{Saeed
  Afshar}, \bibinfo{person}{Jonathan Tapson}, {and} \bibinfo{person}{Andre
  Van~Schaik}.} \bibinfo{year}{2017}\natexlab{}.
\newblock \showarticletitle{EMNIST: Extending MNIST to handwritten letters}. In
  \bibinfo{booktitle}{\emph{2017 International Joint Conference on Neural
  Networks (IJCNN)}}. IEEE, \bibinfo{pages}{2921--2926}.
\newblock


\bibitem[\protect\citeauthoryear{Fang, Cao, Jia, and Gong}{Fang
  et~al\mbox{.}}{2020}]%
        {fang2020local}
\bibfield{author}{\bibinfo{person}{Minghong Fang}, \bibinfo{person}{Xiaoyu
  Cao}, \bibinfo{person}{Jinyuan Jia}, {and} \bibinfo{person}{Neil Gong}.}
  \bibinfo{year}{2020}\natexlab{}.
\newblock \showarticletitle{Local model poisoning attacks to byzantine-robust
  federated learning}. In \bibinfo{booktitle}{\emph{29th $\{$USENIX$\}$
  Security Symposium ($\{$USENIX$\}$ Security 20)}}.
  \bibinfo{pages}{1605--1622}.
\newblock


\bibitem[\protect\citeauthoryear{Grippo and Sciandrone}{Grippo and
  Sciandrone}{2000}]%
        {grippo2000convergence}
\bibfield{author}{\bibinfo{person}{Luigi Grippo} {and} \bibinfo{person}{Marco
  Sciandrone}.} \bibinfo{year}{2000}\natexlab{}.
\newblock \showarticletitle{On the convergence of the block nonlinear
  Gauss--Seidel method under convex constraints}.
\newblock \bibinfo{journal}{\emph{Operations research letters}}
  \bibinfo{volume}{26}, \bibinfo{number}{3} (\bibinfo{year}{2000}),
  \bibinfo{pages}{127--136}.
\newblock


\bibitem[\protect\citeauthoryear{Guerraoui, Rouault, et~al\mbox{.}}{Guerraoui
  et~al\mbox{.}}{2018}]%
        {mhamdi2018hidden}
\bibfield{author}{\bibinfo{person}{Rachid Guerraoui},
  \bibinfo{person}{S{\'e}bastien Rouault}, {et~al\mbox{.}}}
  \bibinfo{year}{2018}\natexlab{}.
\newblock \showarticletitle{The Hidden Vulnerability of Distributed Learning in
  Byzantium}. In \bibinfo{booktitle}{\emph{International Conference on Machine
  Learning}}. \bibinfo{pages}{3521--3530}.
\newblock


\bibitem[\protect\citeauthoryear{Guliani, Beaufays, and Motta}{Guliani
  et~al\mbox{.}}{2020}]%
        {guliani2020training}
\bibfield{author}{\bibinfo{person}{Dhruv Guliani}, \bibinfo{person}{Francoise
  Beaufays}, {and} \bibinfo{person}{Giovanni Motta}.}
  \bibinfo{year}{2020}\natexlab{}.
\newblock \showarticletitle{Training Speech Recognition Models with Federated
  Learning: A Quality/Cost Framework}.
\newblock \bibinfo{journal}{\emph{arXiv preprint arXiv:2010.15965}}
  (\bibinfo{year}{2020}).
\newblock


\bibitem[\protect\citeauthoryear{Hamer, Mohri, and Suresh}{Hamer
  et~al\mbox{.}}{2020}]%
        {hamer2020fedboost}
\bibfield{author}{\bibinfo{person}{Jenny Hamer}, \bibinfo{person}{Mehryar
  Mohri}, {and} \bibinfo{person}{Ananda~Theertha Suresh}.}
  \bibinfo{year}{2020}\natexlab{}.
\newblock \showarticletitle{FedBoost: A Communication-Efficient Algorithm for
  Federated Learning}. In \bibinfo{booktitle}{\emph{International Conference on
  Machine Learning}}. PMLR, \bibinfo{pages}{3973--3983}.
\newblock


\bibitem[\protect\citeauthoryear{Han and Zhang}{Han and Zhang}{2020}]%
        {han2020robust}
\bibfield{author}{\bibinfo{person}{Yufei Han} {and} \bibinfo{person}{Xiangliang
  Zhang}.} \bibinfo{year}{2020}\natexlab{}.
\newblock \showarticletitle{Robust Federated Learning via Collaborative Machine
  Teaching.}. In \bibinfo{booktitle}{\emph{AAAI}}. \bibinfo{pages}{4075--4082}.
\newblock


\bibitem[\protect\citeauthoryear{Hsu, Qi, and Brown}{Hsu et~al\mbox{.}}{2019}]%
        {hsu2019measuring}
\bibfield{author}{\bibinfo{person}{Tzu-Ming~Harry Hsu}, \bibinfo{person}{Hang
  Qi}, {and} \bibinfo{person}{Matthew Brown}.} \bibinfo{year}{2019}\natexlab{}.
\newblock \showarticletitle{Measuring the effects of non-identical data
  distribution for federated visual classification}.
\newblock \bibinfo{journal}{\emph{arXiv preprint arXiv:1909.06335}}
  (\bibinfo{year}{2019}).
\newblock


\bibitem[\protect\citeauthoryear{Kairouz, Liu, and Steinke}{Kairouz
  et~al\mbox{.}}{2021}]%
        {pmlrv139kairouz21a}
\bibfield{author}{\bibinfo{person}{Peter Kairouz}, \bibinfo{person}{Ziyu Liu},
  {and} \bibinfo{person}{Thomas Steinke}.} \bibinfo{year}{2021}\natexlab{}.
\newblock \showarticletitle{The Distributed Discrete Gaussian Mechanism for
  Federated Learning with Secure Aggregation}. In
  \bibinfo{booktitle}{\emph{Proceedings of the 38th International Conference on
  Machine Learning}} \emph{(\bibinfo{series}{Proceedings of Machine Learning
  Research}, Vol.~\bibinfo{volume}{139})},
  \bibfield{editor}{\bibinfo{person}{Marina Meila} {and} \bibinfo{person}{Tong
  Zhang}} (Eds.). \bibinfo{pages}{5201--5212}.
\newblock


\bibitem[\protect\citeauthoryear{Kone{\v{c}}n{\`y}, McMahan, Ramage, and
  Richt{\'a}rik}{Kone{\v{c}}n{\`y} et~al\mbox{.}}{2016a}]%
        {konevcny2016federateda}
\bibfield{author}{\bibinfo{person}{Jakub Kone{\v{c}}n{\`y}},
  \bibinfo{person}{H~Brendan McMahan}, \bibinfo{person}{Daniel Ramage}, {and}
  \bibinfo{person}{Peter Richt{\'a}rik}.} \bibinfo{year}{2016}\natexlab{a}.
\newblock \showarticletitle{Federated optimization: Distributed machine
  learning for on-device intelligence}.
\newblock \bibinfo{journal}{\emph{arXiv preprint arXiv:1610.02527}}
  (\bibinfo{year}{2016}).
\newblock


\bibitem[\protect\citeauthoryear{Kone{\v{c}}n{\`y}, McMahan, Yu, Richt{\'a}rik,
  Suresh, and Bacon}{Kone{\v{c}}n{\`y} et~al\mbox{.}}{2016b}]%
        {konevcny2016federatedb}
\bibfield{author}{\bibinfo{person}{Jakub Kone{\v{c}}n{\`y}},
  \bibinfo{person}{H~Brendan McMahan}, \bibinfo{person}{Felix~X Yu},
  \bibinfo{person}{Peter Richt{\'a}rik}, \bibinfo{person}{Ananda~Theertha
  Suresh}, {and} \bibinfo{person}{Dave Bacon}.}
  \bibinfo{year}{2016}\natexlab{b}.
\newblock \showarticletitle{Federated learning: Strategies for improving
  communication efficiency}.
\newblock \bibinfo{journal}{\emph{arXiv preprint arXiv:1610.05492}}
  (\bibinfo{year}{2016}).
\newblock


\bibitem[\protect\citeauthoryear{Konstantinov and Lampert}{Konstantinov and
  Lampert}{2019}]%
        {konstantinov2019robust}
\bibfield{author}{\bibinfo{person}{Nikola Konstantinov} {and}
  \bibinfo{person}{Christoph Lampert}.} \bibinfo{year}{2019}\natexlab{}.
\newblock \showarticletitle{Robust learning from untrusted sources}. In
  \bibinfo{booktitle}{\emph{International Conference on Machine Learning}}.
  PMLR, \bibinfo{pages}{3488--3498}.
\newblock


\bibitem[\protect\citeauthoryear{Li, Cheng, Wang, Liu, and Chen}{Li
  et~al\mbox{.}}{2020a}]%
        {li2020learning}
\bibfield{author}{\bibinfo{person}{Suyi Li}, \bibinfo{person}{Yong Cheng},
  \bibinfo{person}{Wei Wang}, \bibinfo{person}{Yang Liu}, {and}
  \bibinfo{person}{Tianjian Chen}.} \bibinfo{year}{2020}\natexlab{a}.
\newblock \showarticletitle{Learning to Detect Malicious Clients for Robust
  Federated Learning}.
\newblock \bibinfo{journal}{\emph{arXiv preprint arXiv:2002.00211}}
  (\bibinfo{year}{2020}).
\newblock


\bibitem[\protect\citeauthoryear{Li, Hu, Beirami, and Smith}{Li
  et~al\mbox{.}}{2021}]%
        {li2020ditto}
\bibfield{author}{\bibinfo{person}{Tian Li}, \bibinfo{person}{Shengyuan Hu},
  \bibinfo{person}{Ahmad Beirami}, {and} \bibinfo{person}{Virginia Smith}.}
  \bibinfo{year}{2021}\natexlab{}.
\newblock \showarticletitle{Ditto: Fair and robust federated learning through
  personalization}. In \bibinfo{booktitle}{\emph{International Conference on
  Machine Learning}}.
\newblock


\bibitem[\protect\citeauthoryear{Li, Sanjabi, Beirami, and Smith}{Li
  et~al\mbox{.}}{2019}]%
        {li2019fair}
\bibfield{author}{\bibinfo{person}{Tian Li}, \bibinfo{person}{Maziar Sanjabi},
  \bibinfo{person}{Ahmad Beirami}, {and} \bibinfo{person}{Virginia Smith}.}
  \bibinfo{year}{2019}\natexlab{}.
\newblock \showarticletitle{Fair Resource Allocation in Federated Learning}. In
  \bibinfo{booktitle}{\emph{International Conference on Learning
  Representations}}.
\newblock


\bibitem[\protect\citeauthoryear{Li, Liu, Chen, Zheng, Li, and Yan}{Li
  et~al\mbox{.}}{2020b}]%
        {li2020communication}
\bibfield{author}{\bibinfo{person}{Xiaoli Li}, \bibinfo{person}{Nan Liu},
  \bibinfo{person}{Chuan Chen}, \bibinfo{person}{Zibin Zheng},
  \bibinfo{person}{Huizhong Li}, {and} \bibinfo{person}{Qiang Yan}.}
  \bibinfo{year}{2020}\natexlab{b}.
\newblock \showarticletitle{Communication-Efficient Collaborative Learning of
  Geo-Distributed JointCloud from Heterogeneous Datasets}. In
  \bibinfo{booktitle}{\emph{2020 IEEE International Conference on Joint Cloud
  Computing}}. IEEE, \bibinfo{pages}{22--29}.
\newblock


\bibitem[\protect\citeauthoryear{Lin, Kong, Stich, and Jaggi}{Lin
  et~al\mbox{.}}{2020}]%
        {lin2020ensemble}
\bibfield{author}{\bibinfo{person}{Tao Lin}, \bibinfo{person}{Lingjing Kong},
  \bibinfo{person}{Sebastian~U Stich}, {and} \bibinfo{person}{Martin Jaggi}.}
  \bibinfo{year}{2020}\natexlab{}.
\newblock \showarticletitle{Ensemble Distillation for Robust Model Fusion in
  Federated Learning}. In \bibinfo{booktitle}{\emph{NeurIPS}}.
\newblock


\bibitem[\protect\citeauthoryear{Liu and Ngai}{Liu and Ngai}{2019}]%
        {liu2019gaussian}
\bibfield{author}{\bibinfo{person}{Xiuming Liu} {and} \bibinfo{person}{Edith
  Ngai}.} \bibinfo{year}{2019}\natexlab{}.
\newblock \showarticletitle{Gaussian Process Learning for Distributed Sensor
  Networks under False Data Injection Attacks}. In
  \bibinfo{booktitle}{\emph{2019 IEEE Conference on Dependable and Secure
  Computing (DSC)}}. IEEE, \bibinfo{pages}{1--6}.
\newblock


\bibitem[\protect\citeauthoryear{Luo, Wu, Luo, Huang, Huang, Liu, and Yang}{Luo
  et~al\mbox{.}}{2019}]%
        {luo2019real}
\bibfield{author}{\bibinfo{person}{Jiahuan Luo}, \bibinfo{person}{Xueyang Wu},
  \bibinfo{person}{Yun Luo}, \bibinfo{person}{Anbu Huang},
  \bibinfo{person}{Yunfeng Huang}, \bibinfo{person}{Yang Liu}, {and}
  \bibinfo{person}{Qiang Yang}.} \bibinfo{year}{2019}\natexlab{}.
\newblock \showarticletitle{Real-world image datasets for federated learning}.
\newblock \bibinfo{journal}{\emph{arXiv preprint arXiv:1910.11089}}
  (\bibinfo{year}{2019}).
\newblock


\bibitem[\protect\citeauthoryear{Lyu, Yu, and Yang}{Lyu et~al\mbox{.}}{2020}]%
        {lyu2020threats}
\bibfield{author}{\bibinfo{person}{Lingjuan Lyu}, \bibinfo{person}{Han Yu},
  {and} \bibinfo{person}{Qiang Yang}.} \bibinfo{year}{2020}\natexlab{}.
\newblock \showarticletitle{Threats to federated learning: A survey}.
\newblock \bibinfo{journal}{\emph{arXiv preprint arXiv:2003.02133}}
  (\bibinfo{year}{2020}).
\newblock


\bibitem[\protect\citeauthoryear{Mahloujifar, Mahmoody, and
  Mohammed}{Mahloujifar et~al\mbox{.}}{2019}]%
        {mahloujifar2019data}
\bibfield{author}{\bibinfo{person}{Saeed Mahloujifar},
  \bibinfo{person}{Mohammad Mahmoody}, {and} \bibinfo{person}{Ameer Mohammed}.}
  \bibinfo{year}{2019}\natexlab{}.
\newblock \showarticletitle{Data Poisoning Attacks in Multi-Party Learning}. In
  \bibinfo{booktitle}{\emph{International Conference on Machine Learning}}.
  \bibinfo{pages}{4274--4283}.
\newblock


\bibitem[\protect\citeauthoryear{McMahan, Moore, Ramage, Hampson, and
  y~Arcas}{McMahan et~al\mbox{.}}{2017}]%
        {mcmahan2017communication}
\bibfield{author}{\bibinfo{person}{Brendan McMahan}, \bibinfo{person}{Eider
  Moore}, \bibinfo{person}{Daniel Ramage}, \bibinfo{person}{Seth Hampson},
  {and} \bibinfo{person}{Blaise~Aguera y Arcas}.}
  \bibinfo{year}{2017}\natexlab{}.
\newblock \showarticletitle{Communication-efficient learning of deep networks
  from decentralized data}. In \bibinfo{booktitle}{\emph{Artificial
  Intelligence and Statistics}}. \bibinfo{pages}{1273--1282}.
\newblock


\bibitem[\protect\citeauthoryear{Mohri, Sivek, and Suresh}{Mohri
  et~al\mbox{.}}{2019}]%
        {mohri2019agnostic}
\bibfield{author}{\bibinfo{person}{Mehryar Mohri}, \bibinfo{person}{Gary
  Sivek}, {and} \bibinfo{person}{Ananda~Theertha Suresh}.}
  \bibinfo{year}{2019}\natexlab{}.
\newblock \showarticletitle{Agnostic Federated Learning}. In
  \bibinfo{booktitle}{\emph{International Conference on Machine Learning}}.
  \bibinfo{pages}{4615--4625}.
\newblock


\bibitem[\protect\citeauthoryear{Peng, Huang, Zhu, and Saenko}{Peng
  et~al\mbox{.}}{2019}]%
        {peng2019federated}
\bibfield{author}{\bibinfo{person}{Xingchao Peng}, \bibinfo{person}{Zijun
  Huang}, \bibinfo{person}{Yizhe Zhu}, {and} \bibinfo{person}{Kate Saenko}.}
  \bibinfo{year}{2019}\natexlab{}.
\newblock \showarticletitle{Federated Adversarial Domain Adaptation}. In
  \bibinfo{booktitle}{\emph{International Conference on Learning
  Representations}}.
\newblock


\bibitem[\protect\citeauthoryear{Pillutla, Kakade, and Harchaoui}{Pillutla
  et~al\mbox{.}}{2019}]%
        {pillutla2019robust}
\bibfield{author}{\bibinfo{person}{Krishna Pillutla}, \bibinfo{person}{Sham~M
  Kakade}, {and} \bibinfo{person}{Zaid Harchaoui}.}
  \bibinfo{year}{2019}\natexlab{}.
\newblock \showarticletitle{Robust aggregation for federated learning}.
\newblock \bibinfo{journal}{\emph{arXiv preprint arXiv:1912.13445}}
  (\bibinfo{year}{2019}).
\newblock


\bibitem[\protect\citeauthoryear{Rodr{\'\i}guez-Barroso,
  Mart{\'\i}nez-C{\'a}mara, Luz{\'o}n, Seco, Veganzones, and
  Herrera}{Rodr{\'\i}guez-Barroso et~al\mbox{.}}{2020}]%
        {rodriguez2020dynamic}
\bibfield{author}{\bibinfo{person}{Nuria Rodr{\'\i}guez-Barroso},
  \bibinfo{person}{Eugenio Mart{\'\i}nez-C{\'a}mara}, \bibinfo{person}{M
  Luz{\'o}n}, \bibinfo{person}{Gerardo~Gonz{\'a}lez Seco},
  \bibinfo{person}{Miguel~{\'A}ngel Veganzones}, {and}
  \bibinfo{person}{Francisco Herrera}.} \bibinfo{year}{2020}\natexlab{}.
\newblock \showarticletitle{Dynamic Federated Learning Model for Identifying
  Adversarial Clients}.
\newblock \bibinfo{journal}{\emph{arXiv preprint arXiv:2007.15030}}
  (\bibinfo{year}{2020}).
\newblock


\bibitem[\protect\citeauthoryear{Sattler, M{\"u}ller, Wiegand, and
  Samek}{Sattler et~al\mbox{.}}{2020}]%
        {sattler2020byzantine}
\bibfield{author}{\bibinfo{person}{Felix Sattler},
  \bibinfo{person}{Klaus-Robert M{\"u}ller}, \bibinfo{person}{Thomas Wiegand},
  {and} \bibinfo{person}{Wojciech Samek}.} \bibinfo{year}{2020}\natexlab{}.
\newblock \showarticletitle{On the Byzantine Robustness of Clustered Federated
  Learning}. In \bibinfo{booktitle}{\emph{ICASSP 2020-2020 IEEE International
  Conference on Acoustics, Speech and Signal Processing (ICASSP)}}. IEEE,
  \bibinfo{pages}{8861--8865}.
\newblock


\bibitem[\protect\citeauthoryear{Shakespeare}{Shakespeare}{[n.d.]}]%
        {shakespeare2007complete}
\bibfield{author}{\bibinfo{person}{William Shakespeare}.}
  \bibinfo{year}{[n.d.]}\natexlab{}.
\newblock \bibinfo{booktitle}{\emph{The complete works of William
  Shakespeare}}.
\newblock
\newblock
\shownote{Publicly available at \url{http://www.gutenberg.org/ebooks/100/}.}


\bibitem[\protect\citeauthoryear{Shalev-Shwartz and Ben-David}{Shalev-Shwartz
  and Ben-David}{2014}]%
        {shalev2014understanding}
\bibfield{author}{\bibinfo{person}{Shai Shalev-Shwartz} {and}
  \bibinfo{person}{Shai Ben-David}.} \bibinfo{year}{2014}\natexlab{}.
\newblock \bibinfo{title}{Understanding Machine Learning: From Theory to
  Algorithms}.
\newblock
\newblock


\bibitem[\protect\citeauthoryear{So, G{\"u}ler, and Avestimehr}{So
  et~al\mbox{.}}{2020}]%
        {so2020byzantine}
\bibfield{author}{\bibinfo{person}{Jinhyun So}, \bibinfo{person}{Ba{\c{s}}ak
  G{\"u}ler}, {and} \bibinfo{person}{A~Salman Avestimehr}.}
  \bibinfo{year}{2020}\natexlab{}.
\newblock \showarticletitle{Byzantine-resilient secure federated learning}.
\newblock \bibinfo{journal}{\emph{IEEE Journal on Selected Areas in
  Communications}} \bibinfo{volume}{39}, \bibinfo{number}{7}
  (\bibinfo{year}{2020}), \bibinfo{pages}{2168--2181}.
\newblock


\bibitem[\protect\citeauthoryear{Stich}{Stich}{2018}]%
        {stich2018local}
\bibfield{author}{\bibinfo{person}{Sebastian~U Stich}.}
  \bibinfo{year}{2018}\natexlab{}.
\newblock \showarticletitle{Local SGD Converges Fast and Communicates Little}.
  In \bibinfo{booktitle}{\emph{International Conference on Learning
  Representations}}.
\newblock


\bibitem[\protect\citeauthoryear{Sui, Chen, Zhao, Jia, Xie, and Sun}{Sui
  et~al\mbox{.}}{2020}]%
        {sui2020feded}
\bibfield{author}{\bibinfo{person}{Dianbo Sui}, \bibinfo{person}{Yubo Chen},
  \bibinfo{person}{Jun Zhao}, \bibinfo{person}{Yantao Jia},
  \bibinfo{person}{Yuantao Xie}, {and} \bibinfo{person}{Weijian Sun}.}
  \bibinfo{year}{2020}\natexlab{}.
\newblock \showarticletitle{FedED: Federated Learning via Ensemble Distillation
  for Medical Relation Extraction}. In \bibinfo{booktitle}{\emph{Proceedings of
  the 2020 Conference on Empirical Methods in Natural Language Processing
  (EMNLP)}}. \bibinfo{pages}{2118--2128}.
\newblock


\bibitem[\protect\citeauthoryear{van Dyk and Meng}{van Dyk and Meng}{2001}]%
        {krizhevsky2009learning}
\bibfield{author}{\bibinfo{person}{David~A van Dyk} {and}
  \bibinfo{person}{Xiao-Li Meng}.} \bibinfo{year}{2001}\natexlab{}.
\newblock \showarticletitle{The Art of Data Augmentation}.
\newblock \bibinfo{journal}{\emph{Journal of Computational and Graphical
  Statistics}} \bibinfo{volume}{10}, \bibinfo{number}{1}
  (\bibinfo{year}{2001}), \bibinfo{pages}{1--50}.
\newblock


\bibitem[\protect\citeauthoryear{Wais, Lingamneni, Cook, Fennell, Goldenberg,
  Lubarov, Marin, and Simons}{Wais et~al\mbox{.}}{2010}]%
        {wais2010towards}
\bibfield{author}{\bibinfo{person}{Paul Wais}, \bibinfo{person}{Shivaram
  Lingamneni}, \bibinfo{person}{Duncan Cook}, \bibinfo{person}{Jason Fennell},
  \bibinfo{person}{Benjamin Goldenberg}, \bibinfo{person}{Daniel Lubarov},
  \bibinfo{person}{David Marin}, {and} \bibinfo{person}{Hari Simons}.}
  \bibinfo{year}{2010}\natexlab{}.
\newblock \showarticletitle{Towards Building a High-Quality Workforce with
  Mechanical Turk}. In \bibinfo{booktitle}{\emph{In Proc. NIPS Workshop on
  Computational Social Science and the Wisdom of Crowds}}. Citeseer.
\newblock


\bibitem[\protect\citeauthoryear{Wang, Yurochkin, Sun, Papailiopoulos, and
  Khazaeni}{Wang et~al\mbox{.}}{2019}]%
        {wang2020federated}
\bibfield{author}{\bibinfo{person}{Hongyi Wang}, \bibinfo{person}{Mikhail
  Yurochkin}, \bibinfo{person}{Yuekai Sun}, \bibinfo{person}{Dimitris
  Papailiopoulos}, {and} \bibinfo{person}{Yasaman Khazaeni}.}
  \bibinfo{year}{2019}\natexlab{}.
\newblock \showarticletitle{Federated Learning with Matched Averaging}. In
  \bibinfo{booktitle}{\emph{International Conference on Learning
  Representations}}.
\newblock


\bibitem[\protect\citeauthoryear{Wang and Joshi}{Wang and Joshi}{2019}]%
        {wang2019cooperative}
\bibfield{author}{\bibinfo{person}{Jianyu Wang} {and} \bibinfo{person}{Gauri
  Joshi}.} \bibinfo{year}{2019}\natexlab{}.
\newblock \showarticletitle{Cooperative SGD: A Unified Framework for the Design
  and Analysis of Communication-Efficient SGD Algorithms}. In
  \bibinfo{booktitle}{\emph{ICML Workshop on Coding Theory for Machine
  Learning}}.
\newblock


\bibitem[\protect\citeauthoryear{Yang, Liu, Chen, and Tong}{Yang
  et~al\mbox{.}}{2019a}]%
        {yang2019federated}
\bibfield{author}{\bibinfo{person}{Qiang Yang}, \bibinfo{person}{Yang Liu},
  \bibinfo{person}{Tianjian Chen}, {and} \bibinfo{person}{Yongxin Tong}.}
  \bibinfo{year}{2019}\natexlab{a}.
\newblock \showarticletitle{Federated machine learning: Concept and
  applications}.
\newblock \bibinfo{journal}{\emph{ACM Transactions on Intelligent Systems and
  Technology (TIST)}} \bibinfo{volume}{10}, \bibinfo{number}{2}
  (\bibinfo{year}{2019}), \bibinfo{pages}{1--19}.
\newblock


\bibitem[\protect\citeauthoryear{Yang, Liu, Cheng, Kang, Chen, and Yu}{Yang
  et~al\mbox{.}}{2019b}]%
        {qiangyang2019federated}
\bibfield{author}{\bibinfo{person}{Qiang Yang}, \bibinfo{person}{Yang Liu},
  \bibinfo{person}{Yong Cheng}, \bibinfo{person}{Yan Kang},
  \bibinfo{person}{Tianjian Chen}, {and} \bibinfo{person}{Han Yu}.}
  \bibinfo{year}{2019}\natexlab{b}.
\newblock \showarticletitle{Federated learning}.
\newblock \bibinfo{journal}{\emph{Synthesis Lectures on Artificial Intelligence
  and Machine Learning}} \bibinfo{volume}{13}, \bibinfo{number}{3}
  (\bibinfo{year}{2019}), \bibinfo{pages}{1--207}.
\newblock


\bibitem[\protect\citeauthoryear{Yang, Andrew, Eichner, Sun, Li, Kong, Ramage,
  and Beaufays}{Yang et~al\mbox{.}}{2018}]%
        {yang2018applied}
\bibfield{author}{\bibinfo{person}{Timothy Yang}, \bibinfo{person}{Galen
  Andrew}, \bibinfo{person}{Hubert Eichner}, \bibinfo{person}{Haicheng Sun},
  \bibinfo{person}{Wei Li}, \bibinfo{person}{Nicholas Kong},
  \bibinfo{person}{Daniel Ramage}, {and} \bibinfo{person}{Fran{\c{c}}oise
  Beaufays}.} \bibinfo{year}{2018}\natexlab{}.
\newblock \showarticletitle{Applied federated learning: Improving google
  keyboard query suggestions}.
\newblock \bibinfo{journal}{\emph{arXiv preprint arXiv:1812.02903}}
  (\bibinfo{year}{2018}).
\newblock


\bibitem[\protect\citeauthoryear{Yin, Chen, Kannan, and Bartlett}{Yin
  et~al\mbox{.}}{2018}]%
        {yin2018byzantine}
\bibfield{author}{\bibinfo{person}{Dong Yin}, \bibinfo{person}{Yudong Chen},
  \bibinfo{person}{Ramchandran Kannan}, {and} \bibinfo{person}{Peter
  Bartlett}.} \bibinfo{year}{2018}\natexlab{}.
\newblock \showarticletitle{Byzantine-Robust Distributed Learning: Towards
  Optimal Statistical Rates}. In \bibinfo{booktitle}{\emph{International
  Conference on Machine Learning}}. \bibinfo{pages}{5650--5659}.
\newblock


\bibitem[\protect\citeauthoryear{Yin, Kannan, and Bartlett}{Yin
  et~al\mbox{.}}{2019}]%
        {yin2019rademacher}
\bibfield{author}{\bibinfo{person}{Dong Yin}, \bibinfo{person}{Ramchandran
  Kannan}, {and} \bibinfo{person}{Peter Bartlett}.}
  \bibinfo{year}{2019}\natexlab{}.
\newblock \showarticletitle{Rademacher complexity for adversarially robust
  generalization}. In \bibinfo{booktitle}{\emph{International Conference on
  Machine Learning}}. PMLR, \bibinfo{pages}{7085--7094}.
\newblock


\bibitem[\protect\citeauthoryear{Yu, Yang, and Zhu}{Yu et~al\mbox{.}}{2019}]%
        {yu2019parallel}
\bibfield{author}{\bibinfo{person}{Hao Yu}, \bibinfo{person}{Sen Yang}, {and}
  \bibinfo{person}{Shenghuo Zhu}.} \bibinfo{year}{2019}\natexlab{}.
\newblock \showarticletitle{Parallel restarted SGD with faster convergence and
  less communication: Demystifying why model averaging works for deep
  learning}. In \bibinfo{booktitle}{\emph{Proceedings of the AAAI Conference on
  Artificial Intelligence}}, Vol.~\bibinfo{volume}{33}.
  \bibinfo{pages}{5693--5700}.
\newblock


\bibitem[\protect\citeauthoryear{Zheng, Huang, and Kwok}{Zheng
  et~al\mbox{.}}{2019}]%
        {zheng2019communication}
\bibfield{author}{\bibinfo{person}{Shuai Zheng}, \bibinfo{person}{Ziyue Huang},
  {and} \bibinfo{person}{James Kwok}.} \bibinfo{year}{2019}\natexlab{}.
\newblock \showarticletitle{Communication-efficient distributed blockwise
  momentum SGD with error-feedback}. In \bibinfo{booktitle}{\emph{Advances in
  Neural Information Processing Systems}}. \bibinfo{pages}{11450--11460}.
\newblock


\end{thebibliography}

\appendix

\section{Proof of Theorem \ref{thm:main_bound}}
\label{proof1}
\mainthm*



\begin{proof}
Write:
\begin{equation}
\label{eqn:equation_i_need}
    \mathcal{L}_{\mathcal{D}_{\bm{\alpha}}}\left(h\right) \leq \hat{\mathcal{L}}_{\mathcal{D}_{\bm{\alpha}}}\left(h\right) + \sup_{f\in \mathcal{H}}\left(\mathcal{L}_{\mathcal{D}_{\bm{\alpha}}}\left(f\right)- \hat{\mathcal{L}}_{\mathcal{D}_{\bm{\alpha}}}\left(f\right)\right)
\end{equation}
To link the second term to its expectation, we prove the following:
\begin{lemma}
\label{lem:mcdiar_condition}
Define the function $\phi:\left(\mathcal{X}\times\mathcal{Y}\right)^m \rightarrow \mathbb{R}$ by:
\begin{small}
$$ \phi\left(\{x_{1,1}, y_{1,1}\}, \ldots, \{x_{N, m_N}, y_{N, m_N}\}\right) = \sup_{f\in \mathcal{H}}\left(\mathcal{L}_{\mathcal{D}_{\bm{\alpha}}}\left(f\right)- \hat{\mathcal{L}}_{\mathcal{D}_{\bm{\alpha}}}\left(f\right)\right).$$
\end{small}
Denote for brevity $z_{i,j} = \{x_{i,j}, y_{i,j}\}$. Then, for any $i \in \{1, 2, \ldots, N\}, j \in \{1, 2, \ldots, m_i\}$:
\begin{equation}
\begin{split}
    \sup_{z_{1,1}, \ldots, z_{N, m_N}, z_{i,j}^{'}} |\phi\left(z_{1,1}, \ldots, z_{i,j}, \ldots, z_{N, m_N}\right)  - \phi\left(z_{1,1}, \ldots, z_{i,j}^{'}, \ldots, z_{N, m_N}\right)| \leq \frac{\alpha_i}{m_i}\mathcal{M}
\end{split}
\end{equation}
\end{lemma}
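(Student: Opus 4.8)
The plan is to verify the bounded-differences condition by a direct computation, which then feeds McDiarmid's inequality to control the second term of Eq.~(\ref{eqn:equation_i_need}). The crucial structural observation is that the expected risk $\mathcal{L}_{\mathcal{D}_{\bm{\alpha}}}(f) = \sum_{i=1}^N \alpha_i \expect_{z\sim\mathcal{D}_i}(\loss_f(z))$ does not depend on the sample at all; only the empirical term $\hat{\mathcal{L}}_{\mathcal{D}_{\bm{\alpha}}}(f) = \sum_{i=1}^N \frac{\alpha_i}{m_i}\sum_{j=1}^{m_i}\loss_f(z_{i,j})$ sees the data. Replacing a single point $z_{i,j}$ by $z_{i,j}'$ therefore alters exactly one summand of $\hat{\mathcal{L}}_{\mathcal{D}_{\bm{\alpha}}}(f)$, namely $\frac{\alpha_i}{m_i}\loss_f(z_{i,j})$, while every other contribution and all of $\mathcal{L}_{\mathcal{D}_{\bm{\alpha}}}(f)$ are untouched.

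First I would fix an index pair $(i,j)$ and write out, for an arbitrary $f\in\mathcal{H}$, the difference between the functional $\mathcal{L}_{\mathcal{D}_{\bm{\alpha}}}(f) - \hat{\mathcal{L}}_{\mathcal{D}_{\bm{\alpha}}}(f)$ evaluated on the two samples. By the observation above this difference equals $\frac{\alpha_i}{m_i}\bigl(\loss_f(z_{i,j}') - \loss_f(z_{i,j})\bigr)$. Next I would invoke the elementary inequality $|\sup_f A(f) - \sup_f B(f)| \le \sup_f |A(f) - B(f)|$ to pass from the two suprema defining $\phi$ to a single supremum of the pointwise gap, which bounds the left-hand side by $\frac{\alpha_i}{m_i}\sup_{f\in\mathcal{H}}|\loss_f(z_{i,j}') - \loss_f(z_{i,j})|$.

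Finally I would close the argument with the boundedness assumption. Since the loss is nonnegative and bounded by $\mathcal{M}$, every value $\loss_f(z)$ lies in $[0,\mathcal{M}]$, so the pointwise gap $|\loss_f(z_{i,j}') - \loss_f(z_{i,j})|$ never exceeds $\mathcal{M}$. Hence the whole expression is at most $\frac{\alpha_i}{m_i}\mathcal{M}$, as claimed, and since this bound is uniform it is unaffected by taking the supremum over the remaining coordinates and over the replaced point.

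This lemma is essentially routine, so there is no single hard step; the only places to be careful are bookkeeping. One must confirm that the factor attached to the perturbed summand is precisely $\alpha_i/m_i$ (the weight $\alpha_i$ times the $1/m_i$ normalization inside client $i$'s empirical average) rather than $\alpha_i$ or $1/m_i$ alone, and one must use that the loss is bounded in $[0,\mathcal{M}]$ (not merely $|\loss_f|\le\mathcal{M}$, which would loosen the gap to $2\mathcal{M}$). Getting this constant exactly right is what makes the term $\sqrt{\sum_{i}\alpha_i^2/m_i}$ in Theorem~\ref{thm:main_bound} emerge correctly once the squared bounded differences $\sum_{i,j}(\alpha_i/m_i)^2\mathcal{M}^2 = \mathcal{M}^2\sum_i \alpha_i^2/m_i$ are summed in McDiarmid's inequality.
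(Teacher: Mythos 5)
Your proposal is correct and follows essentially the same route as the paper's own proof: isolate the single perturbed summand $\frac{\alpha_i}{m_i}\loss_f(z_{i,j})$ in the empirical risk (noting the population term $\mathcal{L}_{\mathcal{D}_{\bm{\alpha}}}$ is sample-independent), pass the difference of suprema through the inequality $|\sup_f A(f)-\sup_f B(f)|\le\sup_f|A(f)-B(f)|$, and finish with boundedness of the loss in $[0,\mathcal{M}]$. Your closing remark about needing $\loss_f\in[0,\mathcal{M}]$ rather than $|\loss_f|\le\mathcal{M}$ (which would give $2\mathcal{M}$) is a correct and slightly more careful reading of the constant than the paper makes explicit.
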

\begin{proof}
Fix any $i, j$ and any $z_{1,1}, \ldots, z_{N, m_N}, z_{i,j}^{'}$. Denote the $\alpha$-weighted empirical average of the loss with respect to the sample $z_{1,1}, \ldots, z_{i,j}^{'}, \ldots, z_{N, m_N}$ by $\mathcal{L}_{\mathcal{D}_{\bm{\alpha}}}^{'}$. Then we have that:
\begin{align*}
    |\phi\left(\ldots, z_{i,j}, \ldots\right)  - \phi (\ldots, z_{i,j}^{'}, \ldots )| 
    & =  |\sup_{f\in\mathcal{H}}\left(\mathcal{L}_{\mathcal{D}_{\bm{\alpha}}}\left(f\right) - \hat{\mathcal{L}}_{\mathcal{D}_{\bm{\alpha}}}\left(f\right)\right)  - \sup_{f\in\mathcal{H}} (\mathcal{L}_{\mathcal{D}_{\bm{\alpha}}}\left(f\right) - \hat{\mathcal{L}}_{\mathcal{D}_{\bm{\alpha}}}^{'}\left(f\right) )| \\ 
    & \leq |\sup_{f\in\mathcal{H}}(\hat{\mathcal{L}}^{'}_{\mathcal{D}_{\bm{\alpha}}}\left(f\right) - \hat{\mathcal{L}}_{\mathcal{D}_{\bm{\alpha}}}\left(f\right))| \\ 
    & = \frac{\alpha_i}{m_i}|\sup_{f\in\mathcal{H}}\left(\loss_f(z'_{i,j}) - \loss_f(z_{i,j})\right)| \\
    & \leq \frac{\alpha_i}{m_i}\mathcal{M}
\end{align*}
Note: the inequality we used above holds for bounded functions inside the supremum.
\end{proof}
\noindent 
Let $S$ denote a random sample of size $m$ drawn from a distribution as the one generating out data (i.e. $m_i$ samples from $\mathcal{D}_i$ for each $i$). Now, using Lemma \ref{lem:mcdiar_condition}, McDiarmid's inequality gives:
\begin{equation*}
\begin{split}
    \mathbb{P}\left(\phi(S) - \mathbb{E}(\phi(S)) \geq t\right) & \leq \exp\left(-\frac{2t^2}{\sum_{i=1}^N\sum_{j=1}^{m_i}\frac{\alpha_i^2}{m_i^2}\mathcal{M}^2} \right) \\ & = \exp\left(-\frac{2t^2}{\mathcal{M}^2\sum_{i=1}^N \frac{\alpha_i^2}{m_i}}\right)
\end{split}
\end{equation*}
For any $\delta > 0$, setting the right-hand side above to be $\delta/4$ and using (\ref{eqn:equation_i_need}), we obtain that with probability at least $1-\delta/4$:
\begin{equation}
\begin{split}
    \mathcal{L}_{\mathcal{D}_{\bm{\alpha}}}\left(h\right) \leq \hat{\mathcal{L}}_{\mathcal{D}_{\bm{\alpha}}}\left(h\right) & + \mathbb{E}_S\left(\sup_{f\in\mathcal{H}}\left(\mathcal{L}_{\mathcal{D}_{\bm{\alpha}}} (f) - \hat{\mathcal{L}}_{\mathcal{D}_{\bm{\alpha}}}(f)\right)\right)  + \sqrt{\frac{\log\left(\frac{4}{\delta}\right)\mathcal{M}^2}{2}}\sqrt{\sum_{i=1}^N\frac{\alpha_i^2}{m_i}}
\end{split}
\end{equation}
To deal with the expected loss inside the second term, introduce a ghost sample (denoted by $S'$), drawn from the same distributions as our original sample (denoted by $S$). Denoting the weighted empirical loss with respect to the ghost sample by $\mathcal{L}_{\mathcal{D}_{\bm{\alpha}}}^{'}$, $\beta_i = m_i/m$ for all $i$, and using the convexity of the supremum, we obtain:
\begin{equation*}
\begin{split}
    \mathbb{E}_S  \left(\sup_{f\in\mathcal{H}}\left(\mathcal{L}_{\mathcal{D}_{\bm{\alpha}}} (f) - \hat{\mathcal{L}}_{\mathcal{D}_{\bm{\alpha}}}(f)\right)\right) & = \mathbb{E}_{S}\left(\sup_{f\in\mathcal{H}}\left(\mathbb{E}_{S'}\left(\hat{\mathcal{L}}_{\mathcal{D}_{\bm{\alpha}}}^{'}(f)\right) - \hat{\mathcal{L}}_{\mathcal{D}_{\bm{\alpha}}}(f)\right)\right) \\ & \leq \mathbb{E}_{S, S'} \left(\sup_{f\in\mathcal{H}}\left(\hat{\mathcal{L}}_{\mathcal{D}_{\bm{\alpha}}}^{'}(f) - \hat{\mathcal{L}}_{\mathcal{D}_{\bm{\alpha}}}(f) \right)\right) \\ 
    & = \mathbb{E}_{S, S'}\left(\sup_{f\in\mathcal{H}}\left(\frac{1}{m}\sum_{i=1}^N\sum_{j=1}^{m_i}\frac{\alpha_i}{\beta_i}\left(\loss_f(z'_{i,j}) \right. \right. \right.   \left. \left. \left.  - \loss_f(z_{i,j}) \vphantom{L^{'}}\right) \vphantom{\frac{1}{m}\sum_{i=1}^N}\right)\right)
\end{split}
\end{equation*}

Introducing $m$ independent Rademacher random variables and noting that $\left(\loss_f(z') - \loss_f(z)\right)$ and $\sigma\left(\loss_f(z') - \loss_f(z)\right)$ have the same distribution, as long as $\bm{z}$ and $\bm{z}'$ have the same distribution:
\begin{equation*}
\begin{split}
    \mathbb{E}_S  \left(\sup_{f\in\mathcal{H}}\left(\mathcal{L}_{\mathcal{D}_{\bm{\alpha}}} (f) - \hat{\mathcal{L}}_{\mathcal{D}_{\bm{\alpha}}}(f)\right)\right)  & \leq  \mathbb{E}_{S, S', \sigma}\left(\sup_{f\in\mathcal{H}}\left(\frac{1}{m}\sum_{i=1}^N\sum_{j=1}^{m_i}\frac{\alpha_i}{\beta_i}\sigma_{i,j}\left(\loss_f(z_{i,j})^{'}) \right. \right. \right. \left. \left. \left. - \loss_f(z_{i,j}) \vphantom{L^{'}}\right) \vphantom{\frac{1}{m}\sum_{i=1}^N}\right)\right) \\ 
    & \leq \mathbb{E}_{S^{'}, \sigma}\left(\sup_{f\in\mathcal{H}}\left(\frac{1}{m}\sum_{i=1}^N\sum_{j=1}^{m_i}\frac{\alpha_{i}}{\beta_{i}}\sigma_{i,j}\loss_f(z_{i,j})\right)\right) \\ & +  \mathbb{E}_{S, \sigma}\left(\sup_{f\in\mathcal{H}}\left(\frac{1}{m}\sum_{i=1}^N\sum_{j=1}^{m_i}\frac{\alpha_{i}}{\beta_{i}}\left(-\sigma_{i,j}\right)\loss_f(z_{i,j})\right)\right) \\ 
    & = 2\mathbb{E}_{S, \sigma}\left(\sup_{f\in\mathcal{H}}\left(\frac{1}{m}\sum_{i=1}^N\sum_{j=1}^{m_i}\frac{\alpha_{i}}{\beta_{i}}\sigma_{i,j}\loss_f(z_{i,j})\right)\right).
\end{split}
\end{equation*}
We can now link the last term to the empirical analog of the Rademacher complexity, by using the McDiarmid Inequality (with an observation similar to Lemma 1). Putting this together, we obtain that for any $\delta > 0$ with probability at least $1 - \delta/2$:
\begin{equation}
\begin{split}
    \mathcal{L}_{\mathcal{D}_{\bm{\alpha}}}\left(h\right) & \leq \hat{\mathcal{L}}_{\mathcal{D}_{\bm{\alpha}}} \left(h\right) 
     + 2\mathbb{E}_{\sigma}\left(\sup_{f\in\mathcal{H}}\left(\frac{1}{m}\sum_{i=1}^N\sum_{j=1}^{m_i}\frac{\alpha_{i}}{\beta_{i}}\sigma_{i,j}\loss_f(z_{i,j})\right)\right) 
     + 3 \sqrt{\frac{\log\left(\frac{4}{\delta}\right)M^2}{2}}\sqrt{\sum_{i=1}^N\frac{\alpha_i^2}{m_i}}
\end{split}
\end{equation}

Finally, note that:
\begin{align*}
    \mathbb{E}_{\sigma} \left(\sup_{f\in\mathcal{H}}\left(\frac{1}{m}\sum_{i=1}^N\sum_{j=1}^{m_i}\frac{\alpha_{i}}{\beta_{i}}\sigma_{i,j}\loss_f(z_{i,j})\right)\right) & \leq \mathbb{E}_{\sigma}\left(\sum_{i=1}^{N}\alpha_i\sup_{f\in\mathcal{H}}\left(\frac{1}{m_i}\sum_{j=1}^{m_i}\sigma_{i,j}\loss_f(z_{i,j})\right)\right) \\ 
    & = \sum_{i=1}^N \alpha_i \mathbb{E}_{\sigma}\left(\sup_{f\in\mathcal{H}}\left(\frac{1}{m_i}\sum_{j=1}^{m_i}\sigma_{i,j}\loss_f(z_{i,j})\right)\right) \\ 
    & = \sum_{i=1}^N \alpha_i \mathcal{R}_i \left(\mathcal{H}\right)
\end{align*}
Bounding $\hat{\mathcal{L}}_{\mathcal{D}_{\bm{\alpha}}}(h) - \mathcal{L}_{\mathcal{D}_{\bm{\alpha}}}(h)$ with the same quantity and with probability at least $1 - \delta/2$ follows by a similar argument. The result then follows by applying the union bound.
\end{proof}

\section{Proof of Theorem \ref{thm:alpha}}
\label{proof2}
\alphathm*

\begin{proof} 

The Lagrangian function of Eq. (\ref{eq:fl:main}) is
\begin{equation}
\mathbb{L} = \bm{\alpha}^\top {\hat{\risk}}(\bm{w}) + \frac{\lambda}{2}  || \bm{\alpha}^{\top} \bm{m}^{\circ - \frac{1}{2}} ||^2_2 - \bm{\alpha}^{\top} \bm{\beta} - \eta(\bm{\alpha}^{\top} \bm{1} - 1),
\end{equation}
where $\hat{\risk}(\bm{w}) = [\hat{\risk}_1(\bm{w}),\hat{\risk}_2(\bm{w}), ..., \hat{\risk}_N(\bm{w})]^\intercal$, $\circ$ is the Hadamard root operation, $\bm{\beta}$ and  $\eta$ are the Lagrangian multipliers.  Then the following Karush-Kuhn-Tucker (KKT) conditions hold:
\begin{align}
   \partial_{\bm{\alpha}} \mathbb{L}(\bm{\alpha}, \bm{\beta}, \eta) &= 0 \label{main_diff},\\
   \bm{\alpha}^\intercal \bm{1} - 1 &= 0, \\
   \bm{\alpha} &\ge 0, \\
   \bm{\beta} &\ge 0, \\
   \alpha_i \beta_i &= 0 , \forall i = 1, 2, ... N.
\end{align}
According to Eq.~(\ref{main_diff}), we have:
\begin{equation}
    \alpha_i = \frac{m_i(\beta_i + \eta - \hat{\mathcal{L}}_i(\bm{w}))}{\lambda}.
\end{equation}
Since $\beta_i \ge 0$, we discuss the following cases:
\begin{enumerate}
  \item When $\beta_i = 0$, we have $\alpha_i = \frac{m_i(\eta - \hat{\mathcal{L}}_i(\bm{w}))}{\lambda} \ge 0$. Note that we further have $\eta - \hat{\mathcal{L}}_i(\bm{w}) \ge 0$.
  \item When $\beta_i > 0$, from the condition $\alpha_i \beta_i = 0$, we have $\alpha_i = 0$. 
\end{enumerate}
Therefore, the optimal solution to Eq.~(\ref{eq:fl:main}) is given by:
\begin{equation}
\label{alpha}
\alpha_i(\bm{w}) = [\frac{m_i (\eta - \hat{\mathcal{L}}_i(\bm{w}))}{\lambda}]_{+},
\end{equation}
where $[\cdot]_+ = max(0, \cdot)$.
		
 We notice that $\sum_{i=1}^p \alpha_i = 1$, thus we can get:
 \begin{equation}
    \label{eta}
			\eta = \frac{\sum_{i=1}^{p} m_i \hat{\mathcal{L}}_i(\bm{w})  + \lambda}{\sum_{i=1}^{p} m_i}.
\end{equation}

According to $\eta - \hat{\mathcal{L}}_i(\bm{w}) \ge 0$, we have Eq.~(\ref{nonzero}) and Eq.~(\ref{avg}). Finally, plugging Eq.~(\ref{eta}) into Eq.~(\ref{alpha}) yields Eq.~(\ref{inner_solution}).
\end{proof}


\end{document}